\theoremstyle{plain}
\newtheorem{thm}{Theorem}[section]
\newtheorem{prop}[thm]{Proposition}
\newtheorem{lemm}[thm]{Lemma}
\theoremstyle{definition}
\newtheorem{defn}{Definition}[section]
\newcommand{\dela}{\textnormal{Del}}
\newcommand{\circu}{\textnormal{Circ}}
\icmltitlerunning{Active Nearest Neighbor Regression Through Delaunay Refinement}
\begin{document}

\twocolumn[
\icmltitle{Active Nearest Neighbor Regression Through Delaunay Refinement}



\icmlsetsymbol{equal}{*}

\begin{icmlauthorlist}
\icmlauthor{Alexander Kravberg}{equal,yyy}
\icmlauthor{Giovanni Luca Marchetti}{equal,yyy}
\icmlauthor{Vladislav Polianskii}{equal,yyy}
\icmlauthor{Anastasiia Varava}{zzz}
\icmlauthor{Florian T. Pokorny}{yyy}
\icmlauthor{Danica Kragic}{yyy}
\end{icmlauthorlist}

\icmlaffiliation{yyy}{School of Electrical Engineering and Computer Science, Royal Institute of Technology (KTH), Stockholm, Sweden.}
\icmlaffiliation{zzz}{No affiliation}

\icmlcorrespondingauthor{Giovanni Luca Marchetti}{glma@kth.se}

\icmlkeywords{Delaunay triangulation, active learning, nearest neighbor}

\vskip 0.3in
]



\printAffiliationsAndNotice{\icmlEqualContribution} 

\begin{abstract}
We introduce an algorithm for active function approximation based on nearest neighbor regression. Our Active Nearest Neighbor Regressor (ANNR) relies on the Voronoi-Delaunay framework from computational geometry to subdivide the space into cells with constant estimated function value and select novel query points in a way that takes the geometry of the function graph into account. We consider the recent state-of-the-art active function approximator called DEFER, which is based on incremental rectangular partitioning of the space, as the main baseline. The ANNR addresses a number of limitations that arise from the space subdivision strategy used in DEFER. We provide a computationally efficient implementation of our method, as well as theoretical halting guarantees. Empirical results show that ANNR outperforms the baseline for both closed-form functions and real-world examples, such as gravitational wave parameter inference and exploration of the latent space of a generative model. 



\end{abstract}
\begin{figure}[tbh!]
    \centering
    \begin{subfigure}[b]{.37\linewidth}
        \centering
        \includegraphics[width=\linewidth]{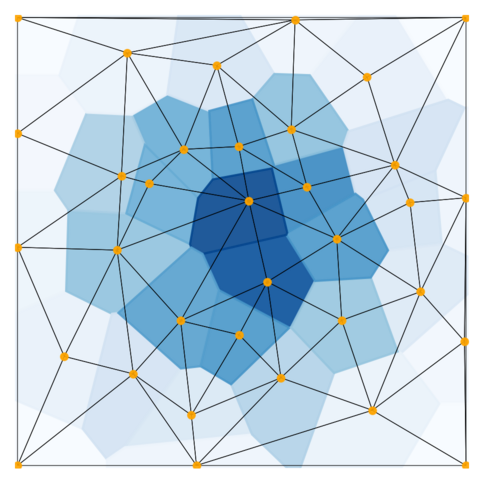}
    \end{subfigure}
    \begin{subfigure}[b]{.37\linewidth}
        \centering
        \includegraphics[width=\linewidth]{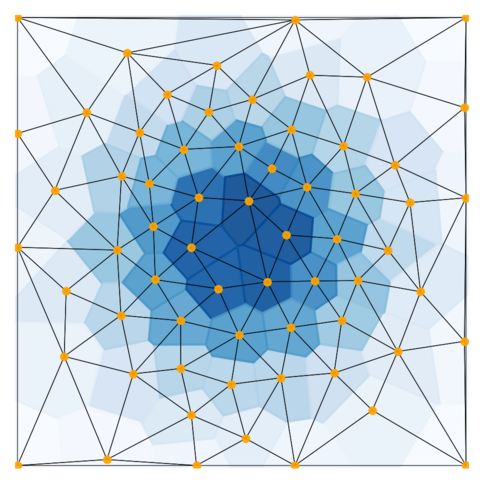}
    \end{subfigure}

    \begin{subfigure}[b]{.37\linewidth}
    \centering
    \includegraphics[width=\linewidth]{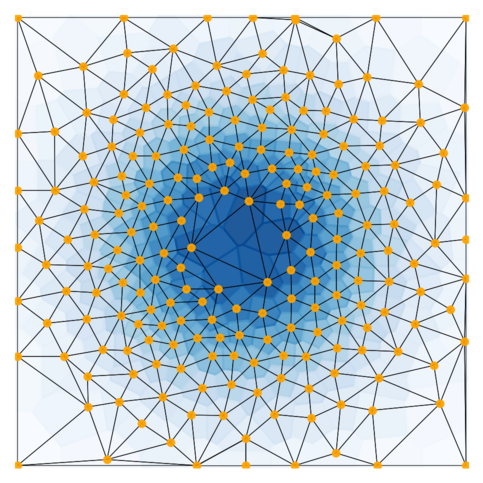}
    \end{subfigure}
       \begin{subfigure}[b]{.37\linewidth}
        \centering
        \includegraphics[width=\linewidth]{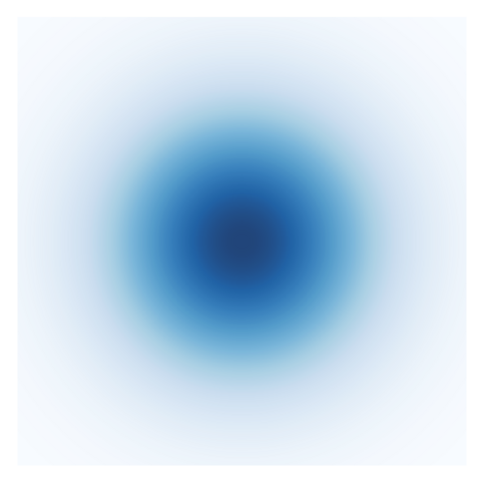}
    \end{subfigure}
    \caption{The ANNR progressively approximating a Gaussian function. The approximation is depicted in blue-scale, with the ground-truth function plotted in the lower-right corner. The Delaunay triangulation is depicted in black as well. }
    \label{fig:intro}
\end{figure}

\section{Introduction}
The need to \emph{actively} approximate a function by iteratively querying novel points from its domain  appears in a variety of theoretical and experimental areas, such as modern physics and astronomy  \cite{bilby, gw-bayesian}, chemistry  \cite{gubaev2018machine, ramakrishnan2015big} and density estimation \cite{active-learning, ABC, bayesian-active-learning-posterior}. Typically, the function to be approximated is not explicitly known, but can be evaluated at arbitrary points from its domain. Such formulation poses two intertwined problems: selecting points for evaluation (queries), and performing interpolation given a dataset with known function values. For the former, naive approaches such as sampling uniformly from the domain are computationally infeasible, especially for sparse and high-dimensional functions. Function evaluation is often expensive in real-life applications, as querying a single data point might require running a time- or resource-consuming experiment. It is thus crucial to design efficient strategies for selecting points to evaluate the function upon.


Recently, a scalable function approximator employing active querying has been proposed under the name DEFER \cite{defer}. It relies on a rectangular partitioning of the ambient space with the datapoints being the centers of the partitions, and approximates the (density) function piecewise constantly on each rectangle. DEFER outperforms state-of-the-art sampling methods in parameter inference tasks as well as in arbitrary function approximation. It is worth noting that despite  \citet{defer} generally address density estimation problems, DEFER is effectively an active function approximator and differs from traditional density estimators \cite{altman1992introduction} that are designed for static data.
 Using a rectangular partitioning, however, has a number of disadvantages. Rectangular approximations are not optimal for arbitrary shapes, especially in high dimensions. Indeed, shape approximation via rectangles becomes progressively worse as dimensions grow, which can be illustrated in a well-known 'spiky rectangle phenomenon' \cite{devi2016conceptualizing}.

We instead propose to upgrade the \emph{Nearest Neighbour Regressor} (NNR) \cite{altman1992introduction} to an active setting. The NNR is a function approximator which is locally constant on the Voronoi tessellation. The space is thus partitioned into Voronoi cells, which are arbitrary polytopes adaptive to the local geometry of data \cite{fortune1995voronoi}. Such a space partitioning and the corresponding locally-constant approximation address the aforementioned disadvantages of DEFER, which we empirically show in the present work. 


The core idea behind our active querying strategy is to look for points where the estimated function presents the largest variation. Such points are the most informative for the update of the approximator. This is done by considering the Delaunay triangulation \cite{fortune1995voronoi}, which is dual to the Voronoi tessellation. The triangulation allows to discretize the graph of the function and to look at its volume, which captures the function variation. Working with the graph of the function allows to balance between the exploration and exploitation strategies. This leads to a geometry-aware procedure deemed \emph{Active Nearest Neighbour Regressor} (ANNR). Voronoi tessellations and their dual Delaunay triangulations  enable to solve both the aforementioned problems of interpolation and querying within a single geometric framework. 

To make the computation of the Delaunay triangulation feasible in high dimensions, we build upon on the approximate stochastic method described in  \cite{polianskii2020voronoi}.  Additionally, we prove via geometric arguments that the volumes of the Delaunay simplices over the graph of $f$ get arbitrary small as the procedure progresses, obtaining halting guarantees for the ANNR. Our implementation of the ANNR is available at \url{https://github.com/vlpolyansky/annr}.

Our main contributions can be summarized as follows: \emph{(i)} a novel active querying procedure deemed ANNR exploiting the geometry of the graph of $f$ through the Delaunay triangulation; \emph{(ii)} an efficient high-dimensional implementation and a theoretical proof of halting for the ANNR; \emph{(iii)} an empirical investigation of the ANNR through a series of experiments demonstrating improved performance and robustness over the recently introduced active function approximator DEFER.


\section{Related Work}


{\bf Delaunay Triangulations.} The Delaunay triangulations were originally introduced in \citet{delaunay1934sphere} as natural triangulations of point-clouds in arbitrary dimension. Because of their remarkable geometrical properties, they have seen extensive applications within computer graphics \cite{bern1992mesh, chen2004mesh} and topological data analysis \cite{edelsbrunner2010computational}. The ANNR is in particular related to Delaunay-based techniques for mesh refinement \cite{shewchuk2002delaunay}, whose goal is to refine the Delaunay triangulation (mesh) of a coarse point-cloud by progressively adding novel points. Ruppert's algorithm \cite{ruppert1995delaunay} and Chew's second algorithm \cite{chew1993guaranteed} are the most popular to this end. The idea underlying both algorithms is to insert the circumcenters of poor quality Delaunay triangles. Although our method follows a similar pattern, we are concerned with the task of function approximation rather than mesh refinement. We thus consider the known values of the ground-truth function and query points in order to refine the approximation, rather than aiming to optimally fill the ambient space. Moreover, mesh refinement algorithms and applications to computer graphics in general are concerned with a two- or three-dimensional ambient space. In contrast, we provide a general framework and an efficient implementation of the ANNR in high dimensions.

{\color{olive}
}


\begin{figure*}[tbh!]
    \centering
    \begin{subfigure}[b]{.32\linewidth}
        \centering
        \includegraphics[width=\linewidth]{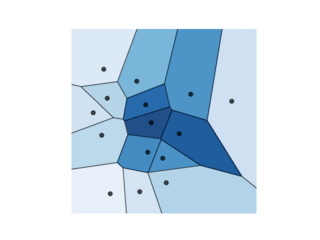}
    \end{subfigure}
    \begin{subfigure}[b]{.32\linewidth}
        \centering
        \includegraphics[width=\linewidth]{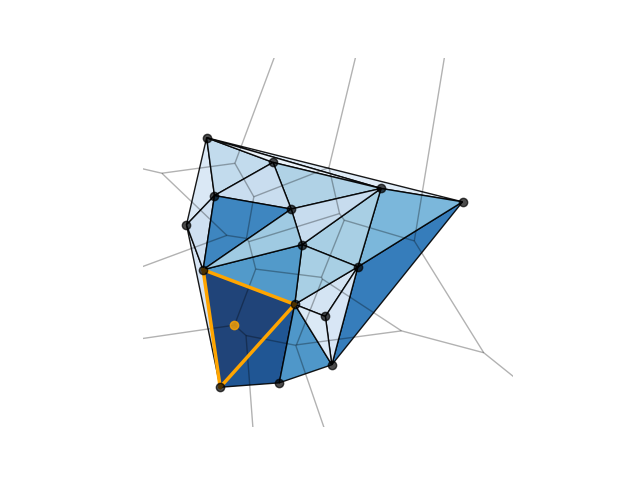}
    \end{subfigure}
        \begin{subfigure}[b]{.32\linewidth}
        \centering
        \includegraphics[width=\linewidth]{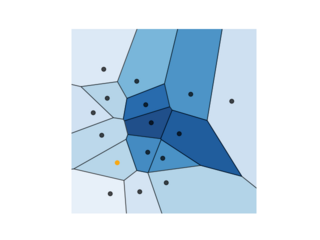}
    \end{subfigure}
        
      \begin{picture}(0,0)
    \put(-160,120){$P_t$}
      \put(155,120){$P_{t+1}$}
        \put(-33,38){$\overline{\sigma} $}
    \end{picture}
    \vspace{-2\baselineskip}
    
    \caption{A graphical depiction of the ANNR querying procedure. \textbf{Left}: the NNR (in blue-scale) of a two-dimensional dataset. \textbf{Center}: the corresponding Delaunay triangulation. The simplices are colored by the volume of their liftings, with the largest one highlighted in orange together with its circumcenter. \textbf{Right}: the updated NNR after the orange point has been added to the dataset. }
    \label{fig:annr}
\end{figure*}

{\bf Deep Active Learning.} Active querying strategies have been extensively studied in the context of deep learning. However, the vast majority of methods assume a predefined finite pool from which points can be queried ('pool-based active learning') or that datapoints can be sampled from the ground-truth distribution and then rejected for querying ('stream-based active learning') \cite{kontorovich2016, ren2021survey, settles2009active}. These methods typically deploy an 'acquisition function' representing some form of uncertainty that the desired query has to maximize \cite{lewis1994sequential, mackay1992information, cohn1994improving}. Picking an arbitrary point in the ambient space --sometimes referred to as 'membership query synthesis'-- is rarely considered in the literature and the corresponding subfield of active learning is relatively undeveloped. The reason is twofold: first, maximizing an acquisition function in the continuum for black-box models such as deep neural networks would require an additional expensive optimization procedure (gradient descent, for example) at each querying step. In contrast, simple but interpretable function approximators such as the piecewise linear ones considered in this work and in DEFER \cite{defer} allow to design querying strategies without the need of an acquisition function. Second, an arbitrary datapoint in the ambient space is likely to result in noise and thus to be uninformative to query, if even possible \cite{baum1992query}. Due to recent advances in generative modeling, this can be nowadays amended by looking for points to query in the latent space of a generative model \cite{zhu2017generative}.

{\bf Active Sampling in Bayesian Inference.} Acquisition functions based on uncertainty naturally occur in Bayesian inference, which is often used to approximate unknown distributions~\cite{active-learning}. Classical Bayesian methods have a number of disadvantages, such as assuming a certain form of a distribution, requiring computable gradients, and intractability. Recent developments in approximate Bayesian computation and Monte Carlo sampling methods address some of these issues~\cite{ABC-batch, VBMC, DNS}, albeit suffering from impractical computational complexity~\cite{gp-scalable}. When compared to DEFER, state-of-the-art Bayesian sampling methods showed comparable or worse performance, in particular, on multi-modal, sparse and discontinuous distributions~\cite{defer}. In addition, these methods can only be applied to a particular class of likelihood functions, and are typically designed to perform sampling from the estimated distributions rather than to estimate a (likelihood) function value at an arbitrary point.

\section{Method}\label{method}

Let $f : \  \mathcal{X} \rightarrow \mathbb{R}$ be a function defined on a connected metric space $(\mathcal{X}, d)$, where $d: \ \mathcal{X} \times \mathcal{X} \rightarrow \mathbb{R}_{\geq 0}$ denotes the distance. Given a finite set of points  $P \subseteq \mathcal{X}$ (referred to as datapoints) on which the values of $f$ are known, a natural way to extend $f$ beyond $P$  is to regress to the values at the nearest datapoint \cite{altman1992introduction}.

\begin{defn}
Let $x$ be a point such that all the distances $d(x,p)$, $p\in P$ are distinct. The value of the \emph{Nearest Neighbor Regressor} (NNR) $\widetilde{f}$ at $x$ coincides with the value of $f$ at the closest datapoint i.e.,
\begin{equation}\label{NNRformula}
    \widetilde{f}(x) = f(\overline{p}) , \ \ \overline{p}= \textnormal{argmin}_{p \in P}d(x,p).
\end{equation}
\end{defn}

Recall that the \emph{Voronoi cell} $C(p)$ of $p \in P$ contains the points in $\mathcal{X}$ that are closer to $p$ than to any other datapoint i.e., 
\begin{equation}
    C(p) = \{x\in \mathcal{X} \ | \ \forall q \in P \ \  d(x,q) \geq d(x,p) \}. 
\end{equation}

The Voronoi cells are closed, cover the ambient space $\mathcal{X}$ and intersect at their boundary. When $\mathcal{X} = \mathbb{R}^m$, they are arbitrary $m$-dimensional convex polytopes~\cite{fortune1995voronoi}. From the point of view of the Voronoi tessellation, the NNR is defined on the interior of Voronoi cells and is constant locally therein.

\subsection{Active Nearest Neighbor Regression}
\label{method-main}
In this work we upgrade the NNR to an active setting. 
A general active procedure consists in updating a dataset inductively by querying for the value of $f$ at a new datapoint based on the current dataset, on which $f$ is assumed to be known. Starting from an initial dataset $P_0$, this produces a sequence $\{ P_t \}_{t \in \mathbb{N}}$ obtained by adding a datapoint at each step: $P_{t+1} = P_t \cup \{ p_t \}$. We are now going to describe our proposed querying strategy.

To update the dataset, we first consider the triangulation which is dual to the Voronoi tessellation. From now on, we focus on the $m$-dimensional Euclidean space $\mathcal{X} = \mathbb{R}^m$. In the following we denote by $\langle \cdot \rangle$ the convex hull of a set. 

\begin{defn}
The \emph{Delaunay triangulation} $\dela_P$ generated by $P$ is the simplicial complex with vertices in $P$ that contains a $k$-dimensional simplex $\sigma = \langle v_0, \cdots, v_k \rangle $, $v_i \in P$, if and only if 
\begin{equation}
\bigcap_{0 \leq i \leq k} C(v_i) \not = \emptyset. 
\end{equation}
\end{defn}

If $P$ is in general position then the simplices in $\dela_P$ are non-degenerate. In that case, a remarkable property of the Delaunay triangulation is that no point in $P$ lies inside the circumsphere of an $m$-dimensional simplex $\sigma \in \dela_P$ \cite{fortune1995voronoi}. 

Our querying strategy intuitively relies on the variation of $f$ over the $m$-dimensional simplices of the Delaunay triangulation. Concretely, we compute the volume of the 'lifting' of such simplices to the graph $\Gamma_{\lambda f} = \{ (x, \lambda f(x)) \ | \ x\in \mathcal{X}\}$ of $ \lambda f$, where $\lambda \in \mathbb{R}_{\geq 0}$ is a hyperparameter. 

\begin{defn}
The \emph{lifting} via $\lambda f$ of an $m$-dimensional simplex $\sigma = \langle v_0, \cdots, v_m \rangle$ in $\mathbb{R}^m$ is the simplex in $\mathbb{R}^{m+1}$
\begin{equation}\label{lifting}
    \hat{\sigma} = \langle (v_0, \lambda f(v_0)), \cdots, (v_m, \lambda f(v_m)) \rangle. 
\end{equation}
\end{defn}

Our algorithm looks for the $m$-dimensional simplex $\sigma \in \dela_P$ that maximizes $\textnormal{Vol}(\hat{\sigma})$. To this end, the $k$-dimensional volume of a simplex $\sigma = \langle v_0, \cdots, v_k \rangle$ can be efficiently computed even for high dimensions via the \emph{Cayley-Menger determinant} \cite{sommerville1958introduction}: 
\begin{equation}\label{cayley}
\textnormal{Vol}(\sigma) = \sqrt{ \frac{(-1)^{k+1}}{2^k (k!)^2} \ \textnormal{det} M }. 
\end{equation}
Here, $M$ is the $(k+2) \times (k+2)$ matrix obtained by padding by a top row and a left column equal to $(0, 1, \cdots, 1)$ the matrix of mutual distances $d(v_i, v_j)^2$. \\

The role of the hyperparameter $\lambda$ is to 'sharpen' the lifted simplices. Since $\lambda$ controls the increment of $\lambda f $, $\textnormal{Vol}(\hat{\sigma})$ depends monotonically on $\lambda$. As $\textnormal{Vol}(\sigma)$ is constant, $\lambda \gg 0$ encourages simplices with high variation even if the base simplex $\sigma$ is small. On the other hand, $\textnormal{Vol}(\sigma) \sim \textnormal{Vol}(\hat{\sigma})$ for $\lambda \sim 0$, in which case the ANNR regularly explores the domain by looking for the largest simplices, disregarding $f$ and its variation completely. In summary,
$\lambda$ can be interpreted as governing a trade-off between exploitation of the estimated variation and domain exploration -- a classical compromise in active learning \cite{explore-exploit}. We make the latter statement formal in Sec.~\ref{geomintuition} and refer to Sec.~\ref{exp:lambda} for a further discussion around the practical choice of $\lambda$. 

Given the simplex $\overline{\sigma}$ that maximizes $\textnormal{Vol}(\hat{\sigma})$, the point we query is the dual of $\overline{\sigma}$ in the Voronoi tessellation. In other words, the novel query is the value of $f$ at the intersection between the $m+1$ Voronoi cells of the vertices of $\overline{{\sigma}}$. It is thus a point where the NNR (Eq.~\ref{NNRformula}) is maximally discontinuous, which motivates the need to gain information on $f$ around it. Geometrically, it coincides with the \emph{circumcenter} of $\overline{\sigma}$ (i.e., the point in $\mathbb{R}^m$ equidistant from the vertices of $\overline{\sigma}$) and we consequently denote it by $\circu(\overline{\sigma})$. Our querying strategy is graphical depicted in Fig.~\ref{fig:annr} and can be formally summarized as follows: 

\begin{equation}
p_{t + 1} = \circu(\overline{\sigma}), \ \  \overline{\sigma} = \textnormal{argmax}_{\sigma \in \dela_{P_t}} \textnormal{Vol}(\hat{\sigma}).
\end{equation}

 We stop the iteration when the maximum volume of a lifting reaches a given threshold $\varepsilon >0$. If care is taken in order to constrain the dataset in a compact region (which is discussed in Sec.~\ref{boundingsection}), the algorithm is guaranteed to halt with mild assumptions on $f$ (see Sec.~\ref{convergence}). The overall procedure is summarized in the pseudocode Alg.~\ref{alg:nnr}. 

\begin{algorithm}[tb]
   \caption{Active Nearest Neighbor Regression (ANNR)}
   \label{alg:nnr}
\begin{algorithmic}
   \STATE Initialize  $P$
   \STATE $maxVol=\varepsilon$
   \WHILE{$maxVol \geq \varepsilon$}
    \STATE Compute the Delaunay triangulation $\dela_{P}$
    \STATE   $maxVol = 0$
    \FOR{$\sigma \in \dela_{P}$ of dimension $m$}
    \STATE Compute the volume $\textnormal{Vol}(\hat{\sigma})$ via Eq.~\ref{cayley}
   \IF{$\textnormal{Vol}(\hat{\sigma}) > maxVol$}
    \STATE    $maxVol = \textnormal{Vol}(\hat{\sigma})$ 
    \STATE   $maxSimplex = \hat{\sigma}$
   \ENDIF    
    \ENDFOR
    \STATE Add the circumcenter of $maxSimplex$ to $P$
   \ENDWHILE
\end{algorithmic}
\end{algorithm}

\subsection{Bounding the Query}\label{boundingsection}

\begin{figure}[th!]
    \centering
    \includegraphics[width=.6\linewidth]{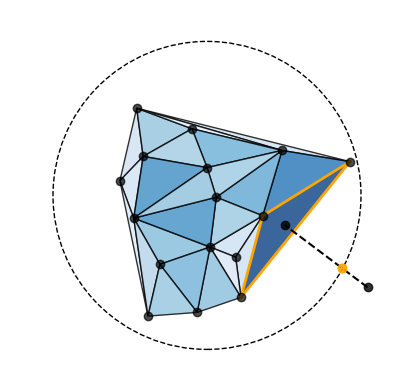}
          \begin{picture}(0,0)
    \put(-120,110){$\partial A$}
      \put(-22,43){$p_{t+1}$}
        \put(-26,22){$\circu(\overline{\sigma})$}
    \end{picture}
    
    \caption{Depiction of the query in the case the circumcenter falls outside of the bounding region $A$. }
    \label{fig:samples}
\end{figure}

The circumcenter of a simplex can possibly lie outside of the simplex itself. When a $m$-simplex is close to being degenerate (i.e., contained in a $(m- 1)$-dimensional affine subspace), the circumcenter tends to escape in an infinite direction. It is thus necessary to limit the expansion of the dataset within a compact region. 

We propose to fix a convex compact set $A \subseteq \mathbb{R}^m$ containing the initialization $P_0$ and to constrain $P_t$ to be contained in $A$ in the following way. Suppose that $\overline{\sigma}$ is the simplex in $\dela_{P_t}$ whose lifting has the largest volume and that $\circu(\overline{\sigma}) \not \in A$. Instead of setting $p_{t+1} =\circu(\overline{\sigma})$ we set $p_{t+1}$ as the intersection between the boundary $\partial A$ of $A$ and the segment connecting the barycenter of $\overline{\sigma}$ (which is contained in $\overline{\sigma}$) and $\circu(\overline{\sigma})$. Such segment is known as 'Euler line' of $\overline{\sigma}$ \cite{kimberling1998triangle} and the intersection between it and $\partial A$ is unique due to the convexity of $A$. The intuition behind this choice for $p_{t+1}$ is that it is the furthest point in $A$ from (the barycenter of) $\overline{\sigma}$ in the direction of the circumcenter of the latter. Moreover, the theoretical halting guarantees discussed in Sec.~\ref{convergence} crucially rely on the procedure described here. A graphical depiction is provided in Fig.~\ref{fig:samples}.

The bounding set $A$ additionally enables to initialize the dataset $P_0$ by uniformly sampling from $A$. Since the Delaunay triangulation covers the convex hull of its vertices, one can additionally enlarge $P_0$ so that $\langle P_0 \rangle =A$ and thus $\langle P_0 \rangle =A$ for every $t$. This is always possible if  $A$ is polytopal. When $A$ is a hypercube (which is the case in all of our experiments) this can be done by adding its $2^m$ vertices to $P_0$. We stick to this form of initialization when implementing Alg.~\ref{alg:nnr}.


\subsection{Computing the Delaunay Triangulation}\label{secapprox}

\begin{figure}[h!]
    \centering
    \includegraphics[width=.5\linewidth]{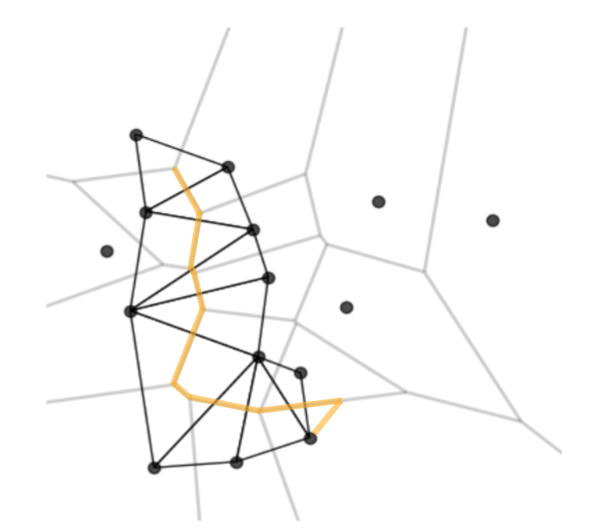}
    \caption{A depiction of a random walk along the boundary of the Voronoi cells (orange) together with the Delaunay simplices found along the way.}
    \label{fig:walk}
\end{figure}

A naive implementation of ANNR involving standard exact methods of constructing a Delaunay triangulation would present a significant computational challenge. Namely, the Delaunay triangulation is prohibitively expensive to compute in high dimensions as the number of simplices grows exponentially with respect to the dimension $m$ in a general scenario \cite{klee1979complexity}. \\

We propose to approximately infer $\dela_P$ via an adaptation of the stochastic ray-casting technique first considered in \citet{rushdi2017vps} and later expanded in \citet{polianskii2020voronoi}. The central idea lies in performing a random walk along the boundary of the Voronoi cells in order to look for their vertices, which in turn correspond by duality to the desired Delaunay simplices. More precisely, a Markov Chain (MC) is constructed in the following way. First, a starting datapoint $x_0$ is picked uniformly from $P$. Then a random direction $\theta_0 \in \mathbb{S}^{m-1}$ is chosen and the intersection $x_1$ between the ray $\{x_0 + t\theta_0 \}_{t \in \mathbb{R}_{\geq 0}}$ cast from $x_0$ and a $(m-1)$-dimensional face of the polytopal Voronoi cell $C(x_0)$ is found by an explicit analytic expression. The procedure is repeated by additionally constraining the subsequent ray to lie on the aforementioned face, obtaining a point $x_2$ on a $(m-2)$-dimensional face of $C(x_0)$, and so on. After $m$ rounds of iteration, a vertex $x_m$ of the Voronoi cell (i.e., a $0$-dimensional face) is obtained. By keeping track of the other Voronoi cells to which the encountered faces belong to, the (vertices of the) Delaunay simplex corresponding to $x_m$ are automatically available. After that, the random walk continues on the $1$-skeleton of the Voronoi tessellation in an analogous manner, possibly departing from $C(x_0)$ and finding other Delaunay simplices. The result is a subset of the Delaunay triangulation which approximates the whole $\dela_P$. By deploying a nearest-neighbor lookup structure such as a $k$-d tree~\cite{bentley1975multidimensional}, every ray intersection described above can be performed with a complexity that depends \textit{logarithmically} on the number of datapoints. We refer to \citet{polianskii2020voronoi} for further details. A graphical depiction of the described procedure is presented in Fig.~\ref{fig:walk}. \\

The vertices of the Voronoi cells corresponding to the found Delaunay simplices (i.e., the circumcenters of the latter) are obtained as a byproduct, thus there is no need for a further computation of the point to query. Because of the iterative nature of the search for Delaunay simplices, we integrate the latter with the main loop in Alg.~\ref{alg:nnr} for a further computational improvement. We further heuristically adjust the initialization of the random walks by picking datapoints close to barycenters of the simplices with the highest lifted volume at the previous step. As suggested in \cite{polianskii2020voronoi}, this can be accomplished via a 'visibility walk'~\cite{devillers2001walking} in the new $1$-skeleton towards such barycenters before initiating the random walk.

\subsection{Complexity Analysis}
In this section we provide a comparison of the computational complexity of the ANNR and DEFER for both data querying and evaluation of the approximated function.

Since each random walk has logarithmic complexity (Sec. \ref{secapprox}), querying a new datapoint has complexity $\mathcal{O} \left(  L\log |P| \right)$ for the ANNR, where $L$ is the number of MC steps performed. The latter is a hyperparameter typical for MC methods and its effect can be mitigated by running multiple walks in parallel. For DEFER, querying has complexity $\mathcal{O} \left( \log |P| \right)$. The  difference in complexity due to $L$ indeed reflects the price of approximating the Delaunay triangulation, as opposed to the exact geometry of rectangular partitions in DEFER. Additionally, the ANNR computes volumes of simplices, which has complexity $\mathcal{O}(m^3)$ w.r.t. the dimension $m$ due to the Cayley-Menger determinant (Eq. \ref{cayley}), in contrast to linear complexity for volumes of rectangles in DEFER. Overall, querying complexity of the ANNR is $\mathcal{O}(L(m^3  + \log |P|)) $.

Evaluating the approximated function has identical complexity $\mathcal{O}(\log |P| )$ with respect to the current dataset size $|P|$ in both methods. This is due to data structures such as $k$-d trees underlying both the nearest neighbor lookup for Voronoi cells in the ANNR and the rectangle lookup in DEFER. 




\section{Theoretical Results}
\subsection{Geometric Interpretation}\label{geomintuition}
In this section we present a geometric interpretation of the ANNR based on an inequality for volumes of graphs of functions. To this end, suppose that $\Omega \subseteq \mathbb{R}^m$ is an $m$-dimensional connected and compact submanifold (with boundary). For a smooth function $f : \ \Omega \rightarrow \mathbb{R}$ denote by $\Gamma_f $ its graph, which is an $m$-dimensional manifold. Additionally, denote by $f_\Omega = \frac{1}{\textnormal{Vol}(\Omega)}\int_\Omega f$ the average of $f$ over $\Omega$ and by $g_f$ the matrix $\nabla_f \otimes \nabla_f$ i.e.,  $(g_f)_{i,j} = \partial_{x_i}f \  \partial_{x_j}f$. 

\begin{prop}\label{interpretation}
There exists a constant $C > 0$ such that for every smooth function $f : \ \Omega \rightarrow \mathbb{R}$ the following inequality holds: 
\begin{equation}\label{inequal}
\log \textnormal{Vol}(\Gamma_f)  \geq C \|f - f_\Omega \|_2^2 + \log \textnormal{Vol}(\Omega) + o(\| g_f \|^2).
\end{equation}
\end{prop}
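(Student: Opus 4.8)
The plan is to reduce the claimed inequality to the classical Poincar\'e--Wirtinger inequality on $\Omega$, after rewriting $\textnormal{Vol}(\Gamma_f)$ through the area formula for graphs and Taylor-expanding around a constant function. Since $\Gamma_f$ is the image of $\Omega$ under the embedding $x \mapsto (x, f(x))$, whose pulled-back metric is $I + g_f$, and since $g_f = \nabla f \otimes \nabla f$ is a rank-one matrix whose unique nonzero eigenvalue is $|\nabla f|^2 = \textnormal{tr}(g_f)$, the matrix determinant lemma gives $\det(I + g_f) = 1 + |\nabla f|^2$, hence
\begin{equation}\label{eq:areaformula}
\textnormal{Vol}(\Gamma_f) = \int_\Omega \sqrt{1 + |\nabla f(x)|^2}\, dx .
\end{equation}

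Next I would expand the integrand around $|\nabla f| = 0$. Using $\sqrt{1+s} = 1 + \tfrac s2 + O(s^2)$ as $s \to 0$ --- together with the globally valid bound $\sqrt{1+s} \ge 1 + \tfrac s2 - \tfrac{s^2}{8}$ to control the sign of the remainder --- and the identity $|\nabla f|^2 = \textnormal{tr}(g_f) = \|g_f\|_F$, integration of \eqref{eq:areaformula} yields $\textnormal{Vol}(\Gamma_f) = \textnormal{Vol}(\Omega) + \tfrac12 \int_\Omega |\nabla f|^2\, dx + o(\|g_f\|^2)$. Dividing by $\textnormal{Vol}(\Omega)$, taking logarithms, and using $\log(1+t) = t + O(t^2)$ with $t = \tfrac{1}{2\,\textnormal{Vol}(\Omega)}\int_\Omega |\nabla f|^2\, dx = O(\|g_f\|)$, one obtains
\begin{equation}\label{eq:logexpansion}
\log \textnormal{Vol}(\Gamma_f) = \log \textnormal{Vol}(\Omega) + \frac{1}{2\,\textnormal{Vol}(\Omega)} \int_\Omega |\nabla f|^2\, dx + R_f ,
\end{equation}
where $R_f$ collects a negative quadratic term of size $O(\|g_f\|^2)$ together with strictly higher-order contributions.

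Finally I would apply the Poincar\'e--Wirtinger inequality on $\Omega$, which is available since $\Omega$ is compact and connected: there is a constant $\kappa = \kappa(\Omega) > 0$ with $\|f - f_\Omega\|_2^2 \le \kappa \int_\Omega |\nabla f|^2\, dx$. Fixing any $C$ with $0 < C < \tfrac{1}{2\kappa\,\textnormal{Vol}(\Omega)}$, the leading term of \eqref{eq:logexpansion} splits as
\begin{equation}
\frac{1}{2\,\textnormal{Vol}(\Omega)} \int_\Omega |\nabla f|^2\, dx \ge C\, \|f - f_\Omega\|_2^2 + \Bigl( \tfrac{1}{2\,\textnormal{Vol}(\Omega)} - C\kappa \Bigr) \int_\Omega |\nabla f|^2\, dx ,
\end{equation}
and the surplus term is nonnegative and of order $\|g_f\|$, hence eventually dominates the $O(\|g_f\|^2)$ part of $R_f$; combining with \eqref{eq:logexpansion} shows $\log \textnormal{Vol}(\Gamma_f) \ge C\|f - f_\Omega\|_2^2 + \log \textnormal{Vol}(\Omega) + o(\|g_f\|^2)$, which is \eqref{inequal}.

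The step I expect to be the main obstacle is this last bookkeeping: one must fix the precise norm used for $g_f$, make the meaning of $o(\|g_f\|^2)$ unambiguous (in particular for functions with large gradient, where it must be read as a harmless negative slack in \eqref{inequal}), and check that the margin gained by taking $C$ strictly below $\tfrac{1}{2\kappa\,\textnormal{Vol}(\Omega)}$ genuinely absorbs both the negative quadratic term from the logarithm and the quartic term from the square-root expansion. The remaining ingredients --- the area formula, the two Taylor expansions, and the Poincar\'e inequality --- are routine.
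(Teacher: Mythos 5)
Your proof is correct and reaches the same endpoint (Poincar\'e--Wirtinger applied to $\int_\Omega |\nabla f|^2$), but the middle of the argument is genuinely different from the paper's. The paper writes $\textnormal{Vol}(\Gamma_f) = \int_\Omega \sqrt{\det(1+g_f)}$ and then applies Jensen's inequality to the concave logarithm, obtaining the exact bound $\log\bigl(\textnormal{Vol}(\Gamma_f)/\textnormal{Vol}(\Omega)\bigr) \ge \frac{1}{2\,\textnormal{Vol}(\Omega)}\int_\Omega \log\det(1+g_f)$; it then linearizes via $\log\det = \textnormal{tr}\log$ and $\log(1+g_f) = g_f + o(\|g_f\|^2)$ and finishes with Poincar\'e. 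You instead exploit the rank-one structure of $g_f$ to compute $\det(I+g_f) = 1 + |\nabla f|^2$ exactly, recover the classical graph-area integrand $\sqrt{1+|\nabla f|^2}$, and perform two Taylor expansions (the square root, then the outer logarithm), closing with Poincar\'e plus the slack $C < \tfrac{1}{2\kappa\,\textnormal{Vol}(\Omega)}$ so that the leftover first-order term $\bigl(\tfrac{1}{2\,\textnormal{Vol}(\Omega)} - C\kappa\bigr)\int_\Omega|\nabla f|^2$ absorbs the quadratic remainders. What your route buys is an honest treatment of the sign of those remainders: since $\log(1+s) \le s$, the paper's expansion actually produces a \emph{negative} correction of order $O(\|g_f\|^2)$ rather than $o(\|g_f\|^2)$, a point the paper glosses over and which your slack argument handles explicitly. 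What the paper's route buys is brevity --- the Jensen step is an exact inequality, so only one expansion is needed and no constant-splitting is required. Your closing caveat about fixing the norm on $g_f$ and the meaning of $o(\|g_f\|^2)$ for large gradients is well taken; the paper is no more precise on this point than you are.
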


We refer to the Appendix for a proof. If $\Omega$ is a Voronoi cell then the NNR takes the value of $f$ at the only datapoint contained in the cell, which is a one-sample Monte Carlo estimate of $f_\Omega$. In other words, $f_\Omega \sim \widetilde{f}$ on $\Omega$. In light of Proposition~\ref{interpretation}, the (logarithm of the) volume of the graph of $f$ is approximately bounding the error between $f$ and its NNR estimation plus a term $\log \textnormal{Vol}(\Omega)$ penalizing domains with large volume. The latter can be thought as a form of regularization. The ANNR exploits this as it computes and approximation to the volume of the graph of (a multiple of) $f$. Such volume is not directly approximable when $\Omega$ is a Voronoi cell because of lack of data in $\Omega$. We thus compute it over the discretization of the manifold $\Gamma_f$ given by the lifted Delaunay triangulation $\{ \hat{\sigma} \}_{\sigma \in \dela_P}$. In other words, the volume is computed for the simplices formed by the datapoints in neighboring intersecting cells. 
    
Note that if we replace $f$ by $\lambda f$ for some $\lambda \in \mathbb{R}_{\geq 0}$ then the right hand-side of Eq.~\ref{inequal} becomes $C \lambda^2 \|f - f_\Omega \|_2^2 + \log \textnormal{Vol}(\Omega) + \lambda^4 o(\| g_f \|^2)$. A natural interpretation is that $\lambda$ controls the balance between error minimization (corresponding to the term $\|f - f_\Omega \|_2^2$) and exploration of the domain (corresponding to the regularization term $\log \textnormal{Vol}(\Omega)$). This motivates the insertion of the hyperparameter $\lambda$ in the ANNR (see Eq.~\ref{lifting}).

\subsection{Halting Guarantees}\label{convergence}
In this section we establish formal halting guarantees for the ANNR given a continuity assumption on the ground-truth function $f$. Since our algorithm stops when a fixed threshold $\varepsilon$ is met, halting can be equivalently reformulated as the vanishing of the corresponding lower limit. The halting guarantee is then given by the following. 

 
\begin{prop}\label{lowerlimit}
 Assume that the ground-truth function $f$ is Lipschitz and let $s_t = \textnormal{max}_{\sigma \in \dela_{P_t}} \textnormal{Vol}(\hat{\sigma})$. Then Alg.~\ref{alg:nnr} always halts for any $\varepsilon$ or, in other words,  $\underline{\textnormal{lim}}_{t \to \infty} s_t = 0$. 
\end{prop}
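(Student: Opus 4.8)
The plan is to argue by contradiction. Suppose Alg.~\ref{alg:nnr} never halts, so it produces an infinite sequence of queries $p_1,p_2,\dots$, all contained in the fixed compact convex set $A$ of Sec.~\ref{boundingsection}, and suppose $\underline{\textnormal{lim}}_t s_t>0$, i.e.\ $s_t\ge\delta_0$ for all $t\ge T$ and some $\delta_0>0$. The first step is to reduce everything to the \emph{base} diameters of the refined simplices. Writing $K$ for a Lipschitz constant of $f$, each edge of the lifting $\hat\sigma$ of an $m$-simplex $\sigma=\langle v_0,\dots,v_m\rangle$ has length $\sqrt{\|v_i-v_j\|^2+\lambda^2(f(v_i)-f(v_j))^2}\le\sqrt{1+\lambda^2K^2}\,\|v_i-v_j\|$, so every edge of $\hat\sigma$ is at most $\sqrt{1+\lambda^2K^2}\,\textnormal{diam}(\sigma)$; since an $m$-simplex whose edges are all $\le\ell$ sits inside a ball of radius $\ell$ and hence has volume $\le\gamma_m\ell^m$ for a dimensional constant $\gamma_m$, we get $\textnormal{Vol}(\hat\sigma)\le\gamma_m(1+\lambda^2K^2)^{m/2}\textnormal{diam}(\sigma)^m$. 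Thus, letting $\overline\sigma_t$ be the argmax simplex chosen at step $t$, the assumption $s_t\ge\delta_0$ forces $\textnormal{diam}(\overline\sigma_t)\ge c'$ for $t\ge T$, where $c'=(\delta_0/\gamma_m)^{1/m}(1+\lambda^2K^2)^{-1/2}$. I will contradict this using that a compact set contains only finitely many points that are pairwise $\rho$-separated, for any fixed $\rho>0$.

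Next I would establish a uniform separation of the queries. Consider step $t\ge T$ and the new point $p_{t+1}$. If $\textnormal{Circ}(\overline\sigma_t)\in A$, then the empty-circumsphere property of $\dela_{P_t}$ (valid for the exact Delaunay triangulation in general position) gives $\|p_{t+1}-q\|\ge R_t$ for every $q\in P_t$, where $R_t$ is the circumradius of $\overline\sigma_t$; since the two vertices realizing $\textnormal{diam}(\overline\sigma_t)$ both lie on the circumsphere, $R_t\ge\tfrac12\textnormal{diam}(\overline\sigma_t)\ge c'/2$, so $p_{t+1}$ is $\tfrac{c'}{2}$-separated from all earlier queries. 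Hence only finitely many steps $t\ge T$ can fall in this case, and after some $T'\ge T$ every step uses the boundary construction of Sec.~\ref{boundingsection}: $p_{t+1}$ is the intersection of $\partial A$ with the segment from the barycenter $b_t$ of $\overline\sigma_t$ to $c_t:=\textnormal{Circ}(\overline\sigma_t)\notin A$. Here I would use that $p_{t+1}$ lies strictly between $b_t$ and $c_t$ and inside the circumball, so $\|p_{t+1}-c_t\|=\|b_t-c_t\|-\|b_t-p_{t+1}\|\le R_t$ and, for every $q\in P_t$, $\|p_{t+1}-q\|\ge R_t-\|p_{t+1}-c_t\|=(R_t-\|b_t-c_t\|)+\|b_t-p_{t+1}\|$. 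Writing $v_i-c_t=R_tu_i$ with $\|u_i\|=1$ and using $\|\sum_iu_i\|^2=(m+1)^2-\sum_{i<j}\|u_i-u_j\|^2$, a short computation gives $R_t-\|b_t-c_t\|=\tfrac{R_t}{m+1}\big((m+1)-\|\sum_iu_i\|\big)\ge\tfrac{R_t}{2(m+1)^2}\sum_{i<j}\|u_i-u_j\|^2\ge\tfrac{\textnormal{diam}(\overline\sigma_t)^2}{2(m+1)^2R_t}$. Combined with a lower bound on $\|b_t-p_{t+1}\|$ extracted from the facts that $b_t\in A$, $p_{t+1}\in\partial A$, and $\textnormal{diam}(\overline\sigma_t)\ge c'$, this is meant to yield a single $\rho>0$ with $\|p_{t+1}-q\|\ge\rho$ for all $t>T'$ and all $q\in P_t$ — contradicting compactness of $A$. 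This would prove $\underline{\textnormal{lim}}_t\textnormal{diam}(\overline\sigma_t)=0$, hence $\underline{\textnormal{lim}}_t s_t=0$, so the stopping test $s_t<\varepsilon$ is eventually met.

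I expect the boundary case $\textnormal{Circ}(\overline\sigma_t)\notin A$ to be the main obstacle. When $\overline\sigma_t$ is close to degenerate its circumradius $R_t$ can be arbitrarily large, so the estimate $R_t-\|b_t-c_t\|\gtrsim\textnormal{diam}(\overline\sigma_t)^2/R_t$ degrades and must be balanced against the position of $b_t$ relative to $\partial A$; this is exactly where the specific choice in Sec.~\ref{boundingsection} — clipping along the barycenter--circumcenter (Euler) line rather than, say, projecting $c_t$ onto $A$ — is used, since it guarantees that $p_{t+1}$ is the farthest point of $A$ from $b_t$ in the escaping direction and therefore still leaves $p_{t+1}$ strictly inside the (empty) circumball at a controlled depth. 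Everything else — the reduction to diameters, the simplex-volume estimates, and the packing argument in a compact set — is routine.
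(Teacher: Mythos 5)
Your overall architecture matches the paper's: argue by contradiction, split into the case where the circumcenter lands in $A$ (handled by the empty-circumsphere property plus a packing argument in the compact set $A$) and the case where the query is clipped to $\partial A$ along the barycenter--circumcenter segment. The interior case in your write-up is correct and essentially identical to the paper's. The gap is exactly where you suspect it is, and your proposed estimate for the boundary case cannot be completed. Your lower bound for the depth of $p_{t+1}$ inside the empty circumball is the sum of two terms, $R_t-\|b_t-c_t\|\gtrsim \textnormal{diam}(\overline\sigma_t)^2/R_t$ and $\|b_t-p_{t+1}\|$, and these can be made arbitrarily small \emph{simultaneously} under your hypotheses: take a thin sliver of height $h\to 0$ whose long face lies along a face of the hypercube $A$. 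Its diameter stays bounded below (so it survives your reduction), its circumradius $R_t\sim \textnormal{diam}^2/h\to\infty$ kills the first term, its barycenter is within $O(h)$ of $\partial A$ so the Euler line exits $A$ after a distance $O(h)$, killing the second term; and indeed the clipped query point then sits within $O(h)$ of the sliver's apex vertex, so no uniform separation $\rho$ exists for such simplices. The root cause is that your reduction from $\textnormal{Vol}(\hat\sigma)\ge\delta_0$ only extracts a \emph{diameter} lower bound on the base simplex, and a diameter bound does not exclude slivers --- which are precisely the simplices whose circumcenters escape $A$.

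The paper closes this by extracting more: it argues that $\textnormal{Vol}(\hat\sigma)>\varepsilon$ forces a lower bound $\textnormal{Vol}(\sigma)>\delta$ on the \emph{base volume}, and then (Lemma in the appendix) that, because $A$ is compact so facet volumes are bounded above, a volume lower bound forces \emph{all heights} of $\sigma$ to be bounded below; together with the circumradius bound and the observation that the angle at each vertex $v$ between $b-v$ and $c-v$ is acute, this gives $d(v,x)\ge\eta$ for every vertex $v$ and every $x$ on the barycenter--circumcenter segment, which is then extended to all of $P_t$. So the fix for your argument is to replace the diameter reduction by a volume-and-heights reduction. Two cautions if you pursue this: (i) your extension from the vertices of $\overline\sigma_t$ to all of $P_t$ via the circumball depth is actually cleaner than the paper's one-line assertion, but it only works once you have a genuine depth bound; and (ii) the volume reduction itself is more delicate than comparing edge lengths, since simplex volume is not monotone in edge lengths (shrinking the long edge of a degenerate triangle with sides $1,1,2$ to $\sqrt2$ increases the area from $0$ to $1/2$), so ``each lifted edge is at most $\sqrt{1+\lambda^2K^2}$ times the base edge'' yields your diameter bound but does not by itself yield $\textnormal{Vol}(\hat\sigma)\le C\,\textnormal{Vol}(\sigma)$; one has to control the gradient of the affine interpolant of $\lambda f$ on $\sigma$, which is where sliver simplices again require care.
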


We refer to the Appendix for a proof. Although the Lipschitz assumption is necessary for the theoretical proof, we empirically show in the experimental section that the ANNR is well-behaved even for highly discontinuous functions such as characteristic functions of geometrically articulated domains (see Sec.~\ref{annr-advantages}).

The proof of Proposition \ref{convergence} remains valid when $s_t$ and the corresponding maximum is computed w.r.t. a partial Delaunay triangulation coming from the approximation discussed in Sec. \ref{secapprox}. That is, the ANNR is guaranteed to halt even with the approximation procedure.

\section{Experiments}

We select DEFER as our primary baseline as it is the state-of-the-art method for active function approximation which is suitable for arbitrary functions \cite{defer}. As an ablation, we additionally compare with the non-active version of the NNR which samples datapoints \emph{uniformly} from $A$, denoted by nANNR. 



In all the experiments, the number of queries is referred to as $N$. For numerical validation, we rely on the standard score Mean Average Error 
$\textnormal{MAE} = \frac{1}{|P_{\textnormal{test}}|}\sum_{p \in P_{\textnormal{test}}}{\left|\widetilde{f}(p) - f(p)\right|}$, where $P_{\textnormal{test}}$ is a test set. We generate  $P_{\textnormal{test}}$ as an equally-spaced grid in $m=2$ dimensions and by uniformly sampling from $A$ when $m>2$ since exhaustive grid sampling is computationally infeasible.

\subsection{Hyperparameter $\lambda$}
\label{exp:lambda} As discussed in Sec.~\ref{method-main} and \ref{geomintuition}, the ANNR has a single hyperparameter -- the lifting coefficient $\lambda$ -- which governs a natural exploration-exploitation trade-off. This is graphically demonstrated in Fig.~\ref{fig:lambda}, where a higher $\lambda$ encourages querying in areas where $f$ varies the most. In practice, we suggest the following heuristic choice for $\lambda$, which we implement in our experiments. We select $\lambda$ proportional to the size of the domain and inversely proportional to the scale of the function, effectively bringing domain and codomain to the same scale to balance exploration and exploitation of the function: $\lambda = \frac{\textnormal{Vol}(A)}{\max{{f}} - \min{{f}}}$. Here, $\max$ and $\min$ can be estimated from prior knowledge or, more concretely, directly evaluated from the initial dataset $P_0$.

\begin{figure}[h]
\centering
    \begin{subfigure}[b]{.32\linewidth}
        \centering
        \subcaption*{$\lambda=0.1$}
        \includegraphics[width=\linewidth]{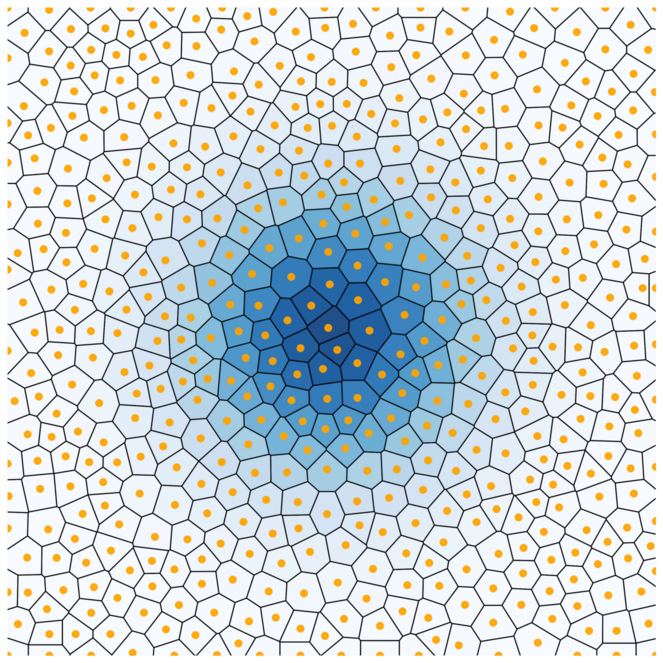}
    \end{subfigure}
    \begin{subfigure}[b]{.32\linewidth}
        \centering
        \subcaption*{$\lambda=1$}
        \includegraphics[width=\linewidth]{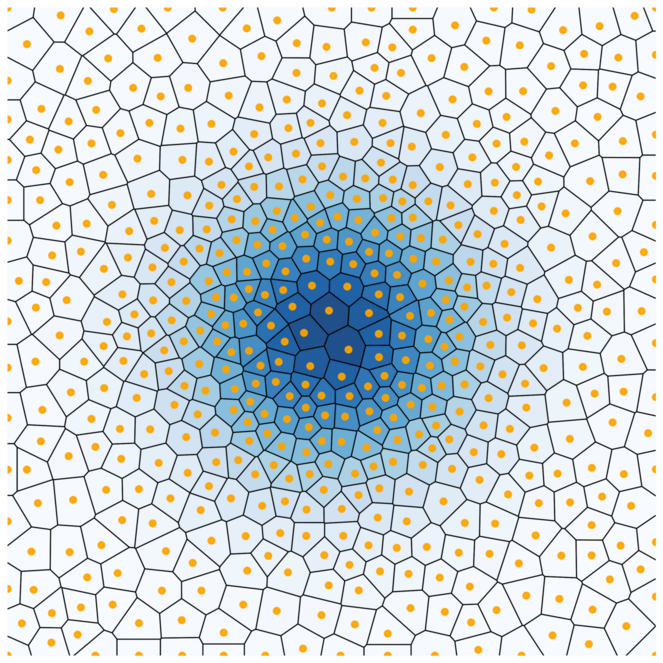}
    \end{subfigure}
    \begin{subfigure}[b]{.32\linewidth}
        \centering
        \subcaption*{$\lambda=10$}
        \includegraphics[width=\linewidth]{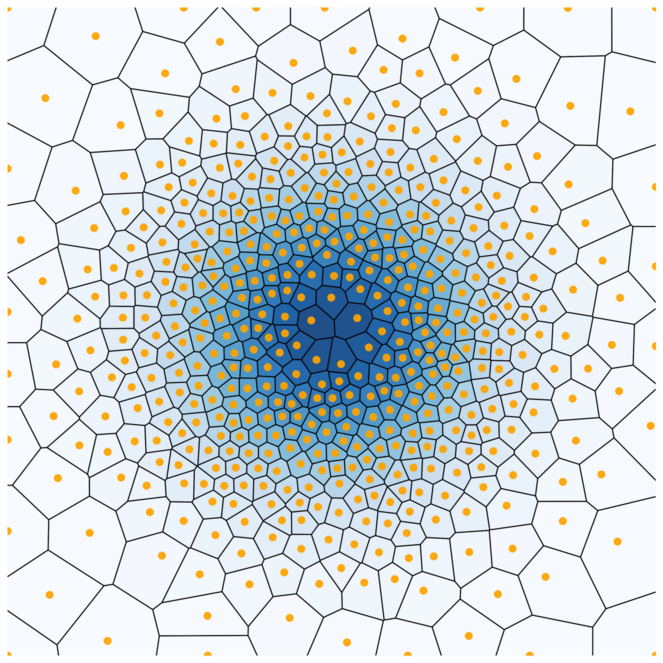}
    \end{subfigure}
    
    \caption{ANNR approximation of a normalized Gaussian with $\sigma^2=0.1$ for various values of $\lambda$ ($N = 500$).} 
    \label{fig:lambda}
\end{figure}


\subsection{Geometric Advantages of the ANNR}
\label{annr-advantages}
In this experiment, we consider characteristic functions with various supports and demonstrate that the ANNR is well suited for approximating the support shape. The ANNR  leverages the variable geometry of Voronoi cells, which results in a fine-grained approximation with a small number of queries. In contrast, DEFER relies on a rectangular partitioning of the space, which, as we show, can result in sub-optimal approximations. This feature also allows ANNR to approximate functions with arbitrary compact domains, in particular those with a small volume compared to $A$ (see the Appendix for an example). 

\begin{figure}[h]
\centering
    \begin{subfigure}[b]{.3\linewidth}
        \centering
        \subcaption*{DEFER}
        \includegraphics[width=\linewidth]{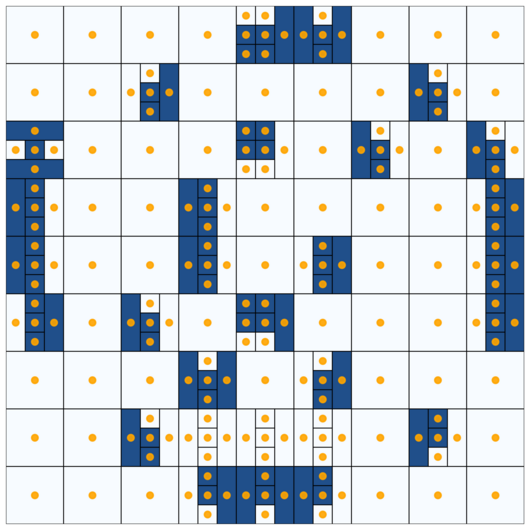}
    \end{subfigure}
    \hspace{\baselineskip}
    \begin{subfigure}[b]{.3\linewidth}
        \centering
        \subcaption*{ANNR}

        \includegraphics[width=\linewidth]{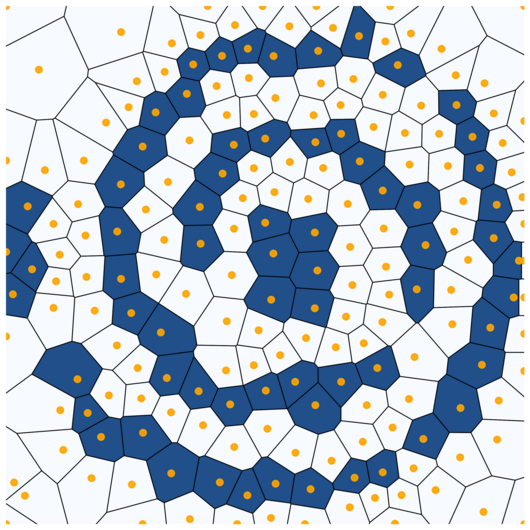}
    \end{subfigure}
    
    \begin{subfigure}[b]{.3\linewidth}
        \centering
        \vspace{0.5\baselineskip}
        \includegraphics[width=\linewidth]{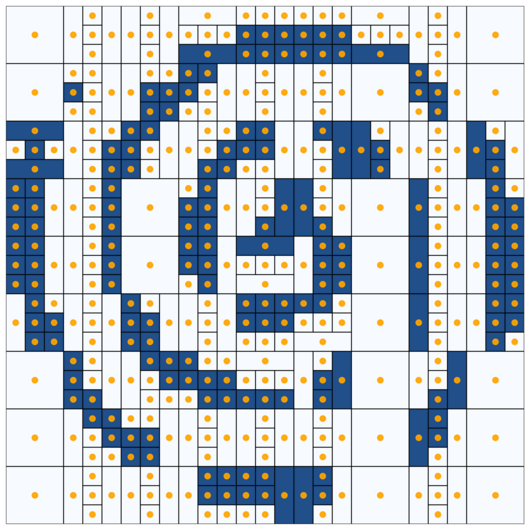}
    \end{subfigure}
    \hspace{\baselineskip}
    \begin{subfigure}[b]{.3\linewidth}
        \centering
        \vspace{0.5\baselineskip}
        \includegraphics[width=\linewidth]{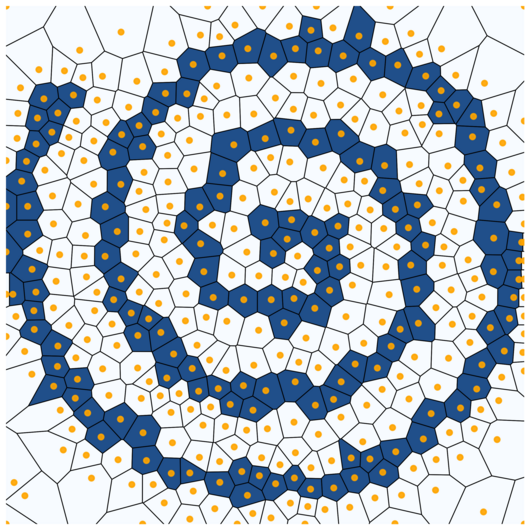}
    \end{subfigure}
    
    \caption{Approximation of a spiral characteristic function (see the Appendix for details). $N = 200$ (top), $N = 400$ (bottom).}
    \label{fig:spiral}
\end{figure}

We first consider the characteristic function of a spiral (Fig.~\ref{fig:spiral}). The ANNR displays a visibly better approximation and has a connected support already after $N=400$ queries. The support of DEFER is instead highly disconnected as it is unable to capture regions where the spiral is misaligned with the Cartesian axes due to its rectangular bias (see the Appendix for details).

\textbf{Rotational Equivariance.}
DEFER is sensitive to rotations of the domain due to its inherent bias towards Cartesian axes. In contrast, the ANNR allows to approximate shapes without any orientation bias since Voronoi cells are arbitrary polytopes. Fig.~\ref{fig:ellipse} demonstrates improved stability of the ANNR approximation with respect to rotations of $\mathbb{R}^2$ compared to DEFER (see the Appendix for details).

\begin{figure}[!tbh]
\centering
    \begin{subfigure}[b]{.3\linewidth}
        \centering
        \subcaption*{DEFER}

        \includegraphics[width=\linewidth]{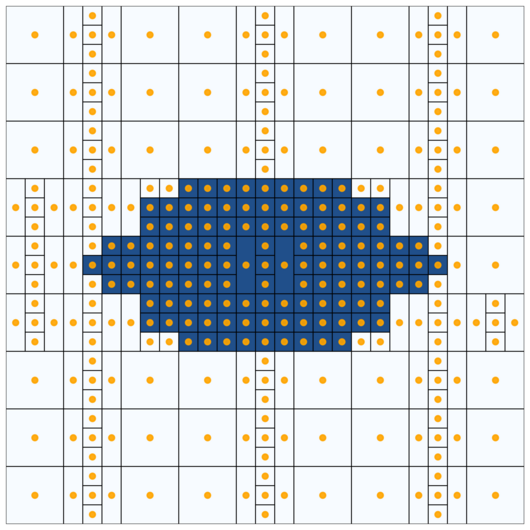}
    \end{subfigure}
    \hspace{\baselineskip}
    \begin{subfigure}[b]{.3\linewidth}
        \centering
        \subcaption*{ANNR}

        \includegraphics[width=\linewidth]{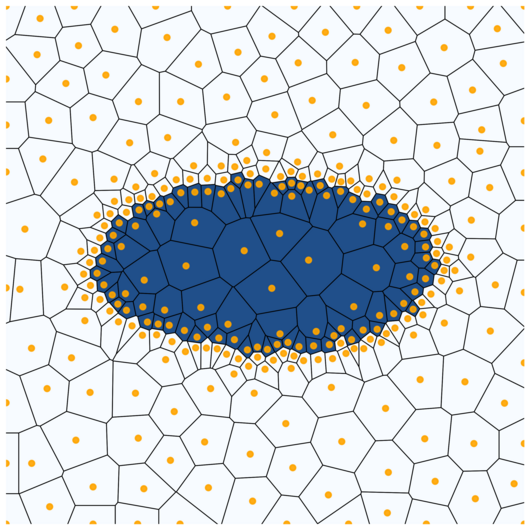}
    \end{subfigure}
    
    \begin{subfigure}[b]{.3\linewidth}
        \centering
        \vspace{0.5\baselineskip}
        \includegraphics[width=\linewidth]{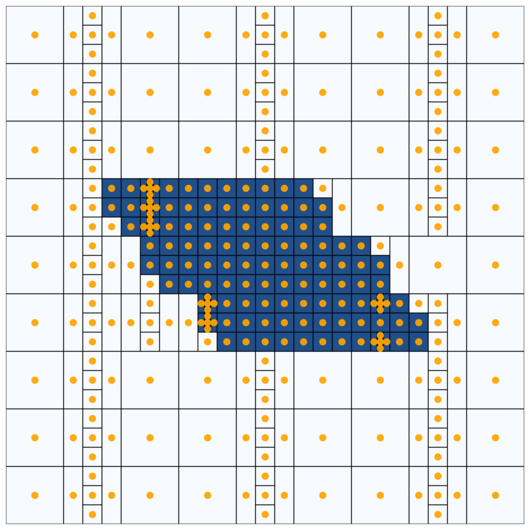}
    \end{subfigure}
    \hspace{\baselineskip}
    \begin{subfigure}[b]{.3\linewidth}
        \centering
        \vspace{0.5\baselineskip}
        \includegraphics[width=\linewidth]{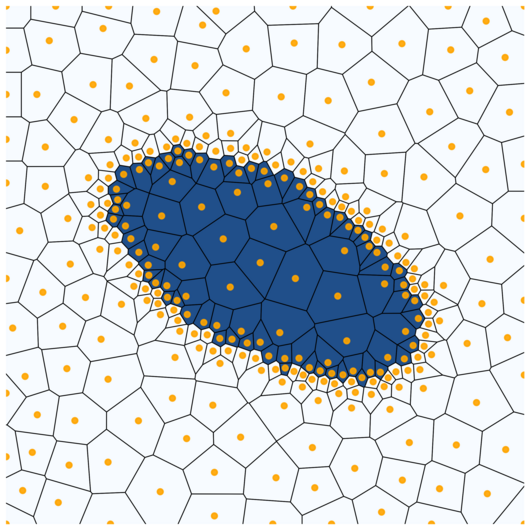}
    \end{subfigure}
    
    \caption{Approximation of an ellipse characteristic function $\mathbf{1}_{x^2 + 4y^2 \le 1}$ with $0^\circ$ (top) and $30^\circ$ (bottom) rotations of the domain ($N=300$).}
    \label{fig:ellipse}
\end{figure}

\textbf{Curse of Dimensionality.}
The bias of DEFER towards rectangular geometry can potentially affect the quality of approximation in high-dimensions due to the increasing \textit{spikiness} of rectangles~\cite{devi2016conceptualizing, blum2020foundations}. To demonstrate this, we consider a characteristic function $\mathbf{1}_{\| x\| \leq 1}$ of a unit ball in $\mathbb{R}^6$ (with $A=[-2,2]^6$). Fig.~\ref{fig:ball10d} reports the score (Fig.~\ref{fig:ball10d-mae}) and query point distribution (Fig.~\ref{fig:ball10d-queries}) with respect to the norm of test and query points respectively.

\begin{figure}[!tbh]
    \centering
    \begin{subfigure}{.45\linewidth}
    \centering
    \includegraphics[width=\linewidth]{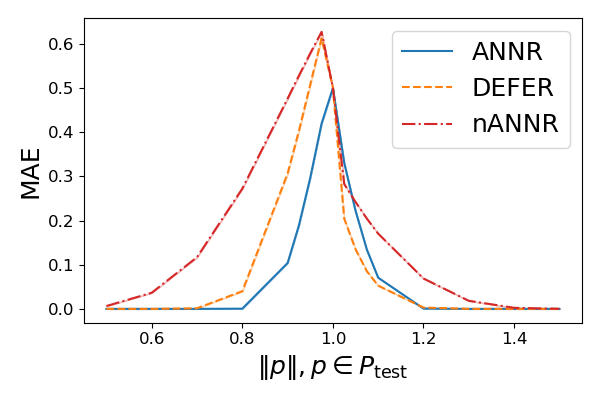}
    \subcaption{\centering MAE scores for test sets at different norms.}
    \label{fig:ball10d-mae}
    \end{subfigure}
    \begin{subfigure}{.45\linewidth}
    \centering
    \includegraphics[width=\linewidth]{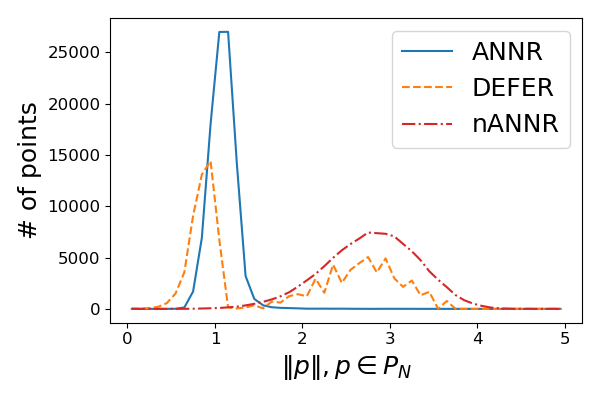}
    \subcaption{\centering Norm distribution of the queried points.}
    \label{fig:ball10d-queries}
    \end{subfigure}
    \caption{Analysis of the $6$-dimensional unit ball approximations based on the norm of data ($N=10^5$ and $\lvert P_{\textnormal{test}} \rvert = 10^7$ per norm).}
    \label{fig:ball10d}
\end{figure}

Fig.~\ref{fig:ball10d-mae} shows that the MAE scores decrease when the test points are sampled away from the discontinuity of $f$ i.e., the boundary of the ball. ANNR's score is significantly lower inside the ball ($\| p\|<1$, $p \in P_{\textnormal{test}}$). At the same time, Fig.~\ref{fig:ball10d-queries} shows that all the queries of the ANNR are queried in the proximity of the boundary, while DEFER's query norm distribution is shifted towards the uniform one (nANNR). 
 
\subsection{Performance Comparison}





In this section we compare the performance of the ANNR with DEFER and the non-active sampler nANNR for articulated functions including parameter estimation of gravitational waves and exploration of the latent space of a deep generative model. The scores are presented in Table \ref{tab:mae-performance}. For the ANNR and the nANNR, we report averages and standard deviations over $10$ runs with different random initializations (in contrast, DEFER has no stochasticity). We set $N=1000$ for the two-dimensional experiment (latent manifold exploration) and $N=10^5$ when $m>2$. Our method outperforms the baseline in terms of MAE by a significant margin.

\begin{table}[!tbh]   
\centering
\caption{Performance Comparison (MAE).}
\label{tab:mae-performance}
\begin{small}  
\begin{tabular}{cccc}
\toprule
& ANNR & DEFER & nANNR \\
\midrule
\begin{tabular}[c]{@{}c@{}}Gravitational waves\\parameter estimation\end{tabular}         & \begin{tabular}[c]{@{}c@{}}$0.4710$\\ $\pm 0.0232$\end{tabular}  & $0.5309$  & \begin{tabular}[c]{@{}c@{}}$0.5317$\\ $\pm 0.2175$\end{tabular}  \\
\rule{0pt}{15pt}   
\begin{tabular}[c]{@{}c@{}}Latent space\\ volume density\end{tabular} & \begin{tabular}[c]{@{}c@{}}$47.0509$\\ $\pm 0.4975$\end{tabular} & $50.5073$ & \begin{tabular}[c]{@{}c@{}}$54.6290$\\ $\pm 0.8010$\end{tabular} \\
\bottomrule
\end{tabular}
\end{small}
\end{table}

\textbf{Gravitational Waves.}
A large area of application for active function approximation is astrophysics and, in particular, gravitational wave parameter inference problems \cite{dynesty, bilby}.
Observation of gravitational waves are rare, and the magnitude of the effect is low, implying scarcity of collected data. With limited observations and dozens of parameters describing the gravitational event, the problem of parameter estimation is commonly solved by Bayesian inference methods approximating a simulated log-likelihood function on the parameters \cite{bilby, gw-bayesian}.


We use the same 6-dimensional formulation of parameter inference as described in \cite{defer}, and refer the reader to the Appendix for a detailed description of parameters. In addition to the original setup, we rescale the function domain to a unit hypercube $A =[0,1]^6$ for convenience and multiply $f$ by a factor of $e^{8000}$ for better numerical stability. Lastly, in order to deal with practical unboundedness of the density function, we perform an adaptive clipping of extensively sharp volumes. The details of the clipping are available in the Appendix. 

Apart from the MAE comparison in Table~\ref{tab:mae-performance}, we present a visualization of the function approximations in Fig.~\ref{fig:gw-marginals} of a single run. The figure displays histograms over marginal distributions over a set of two-dimensional slices of the domain. Each marginalization was performed via Monte-Carlo sampling with $10^5$ samples per bin. The figure shows the visual closeness of ANNR marginalizations to the original distribution compared to the baseline. Marginals over all 15 pairs of parameters are available in the Appendix.

    
\newcommand{\rotated}[1]{\raisebox{-.5\normalbaselineskip}[0pt][0pt]{\rotatebox[origin=c]{90}{\hspace{10pt}\footnotesize #1}}}
    
\begin{figure}[tb!]
    \setlength{\tabcolsep}{0pt}
    \newcommand{\w}{.22\linewidth}
    \renewcommand{\arraystretch}{0}
    \centering
    \begin{tabular}{p{0.07\linewidth}ccc}
        & {\footnotesize $(d_L, \theta)$} & {\footnotesize $(d_L, \psi)$} & {\footnotesize $(a_1, a_2)$}\\[0pt]

        \rotated{Ground Truth} & \includegraphics[align=c, width=\w]{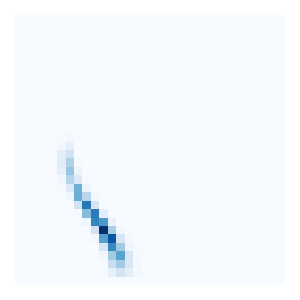} & \includegraphics[align=c, width=\w]{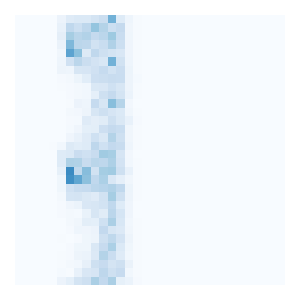} & \includegraphics[align=c, width=\w]{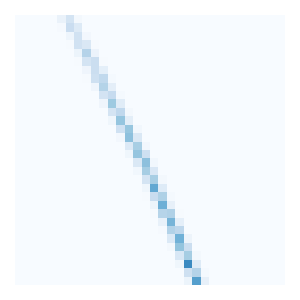}\\[0pt]
        
        \rotated{DEFER} & \includegraphics[align=c, width=\w]{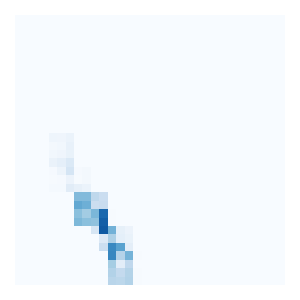} & \includegraphics[align=c, width=\w]{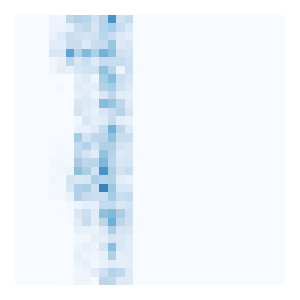} & \includegraphics[align=c, width=\w]{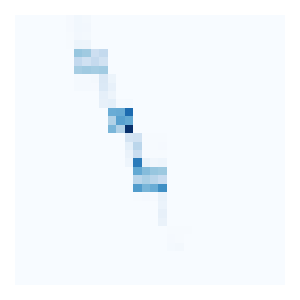}\\[0pt]
        
        \rotated{ANNR} & \includegraphics[align=c, width=\w]{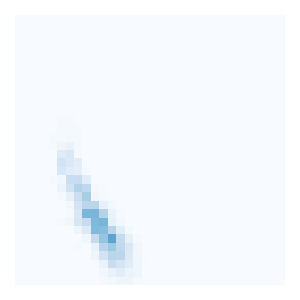} & \includegraphics[align=c, width=\w]{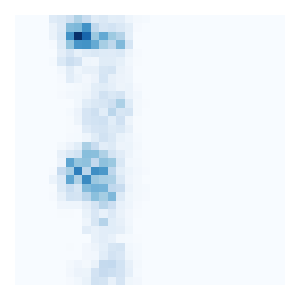} & \includegraphics[align=c, width=\w]{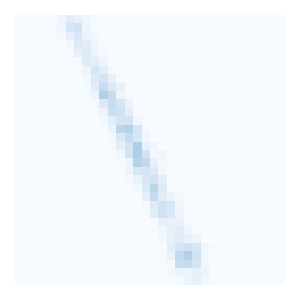}\\[0pt]
    \end{tabular}
    \caption{Marginal distributions for gravitational waves over selected two-dimensional slices. Each slice is identified by a pair of parameters named as in \cite{defer}.}
    \label{fig:gw-marginals}
\end{figure}

\textbf{Latent Manifold Exploration.}
Our last experiment addresses a manifold exploration task on the latent space of a deep generative model. The generative process enables to query an arbitrary point in the latent space and thus suits the framework of the ANNR and DEFER. Moreover, decoding the queried points allows to interpret and semantically evaluate the exploration procedure. To this end, we deploy a generative model $\varphi: \ \mathcal{Z} \rightarrow \mathcal{X}$ that maps a prior distribution on the latent space $\mathcal{Z} = \mathbb{R}^m$ to the data distribution on $\mathcal{X} = \mathbb{R}^n$ and regard it as (the probabilistic analogue of) a parametrization of the data manifold. The `mass' distribution of data on $\mathcal{Z}$ is represented by the induced \emph{volume density} $f$ i.e., the density of the volume form of the Riemannian metric induced by $\varphi$ on $\mathcal{Z}$ via pull-back \cite{arvanitidis2017latent, arvanitidis2020geometrically}. Such volume density is concretely expressed as $f(z) =  \sqrt{ \textnormal{det}\left( J_\varphi(z)^T J_\varphi(z)  \right)}$ where $J_\varphi(z)$ is the Jacobian matrix of $\varphi$ at $z \in \mathcal{Z}$. Intuitively, $f$ can be seen as a 'fuzzy' characteristic function of the latent manifold and an active function approximator for $f$ can be interpreted as progressively exploring such manifold.

\begin{figure}[tb!]
\centering
    \begin{subfigure}[b]{.4\linewidth}
        \centering
    \subcaption*{$N=100$}
        \includegraphics[width=\linewidth]{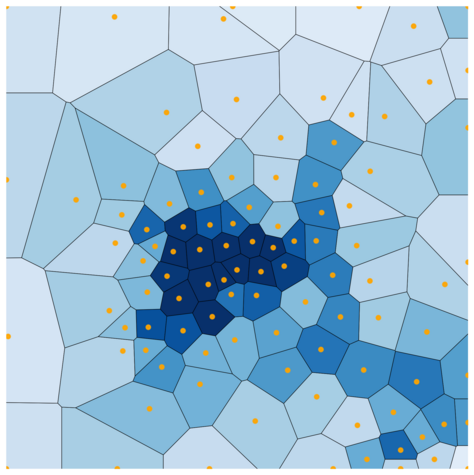}
    \end{subfigure}
    \begin{subfigure}[b]{.4\linewidth}
        \centering
           \subcaption*{$N=500$}

        \includegraphics[width=\linewidth]{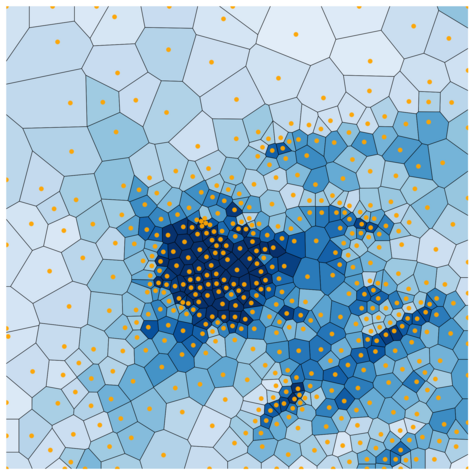}
    \end{subfigure}
    
    \begin{subfigure}[b]{.4\linewidth}
    \centering
    \subcaption*{$N=1000$}
    \includegraphics[width=\linewidth]{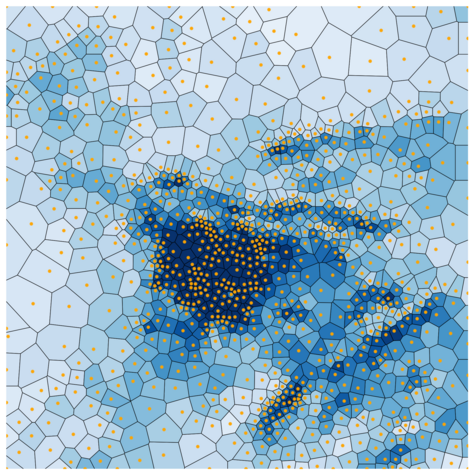}
    \end{subfigure}
    \begin{subfigure}[b]{.4\linewidth}
    \centering
       \subcaption*{Ground Truth}

    \includegraphics[width=\linewidth]{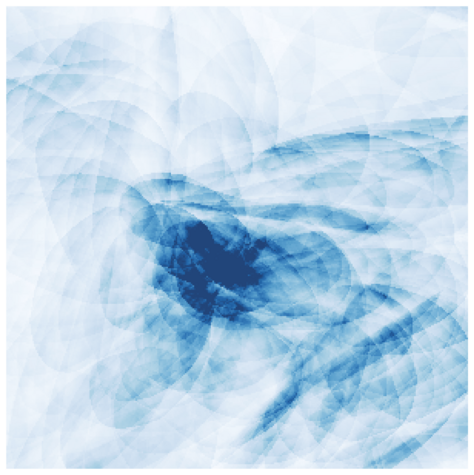}
    \end{subfigure}
    
     \vspace{0.5\baselineskip}
    \begin{subfigure}[b]{.98\linewidth}
    \centering
       \subcaption*{$490 < t \leq 500$}

    \includegraphics[width=\linewidth]{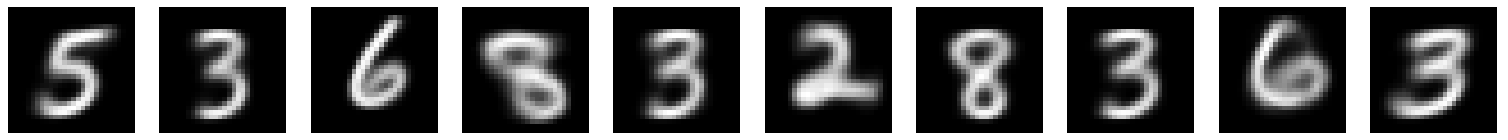}
    \end{subfigure}
    
    \vspace{0.5\baselineskip}
    \begin{subfigure}[b]{.98\linewidth}
    \centering
       \subcaption*{$990 < t \leq 1000$}

    \includegraphics[width=\linewidth]{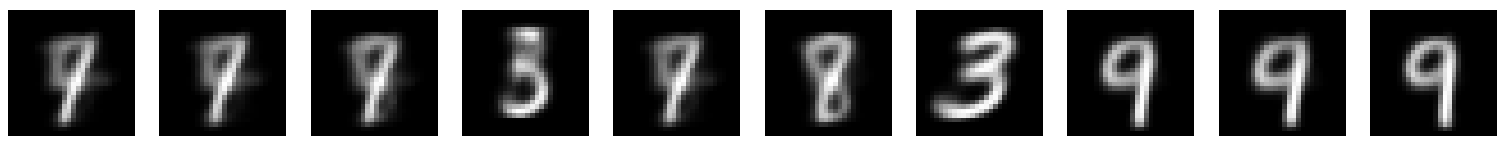}
    \end{subfigure}

    \caption{\textbf{Top}: approximation of the latent space volume density of a deep generative model. \textbf{Bottom}: images decoded from the queried latent points $p_t$ for two intervals of~$t$. }
    \label{fig:latent_exploration}
\end{figure}

We train $\varphi$ as (the decoder of) a Variational Autoencoder (VAE, \citet{kingma2013auto}) on the MNIST dataset \cite{deng2012mnist} of gray-scale images of hand-written digits ($n=784$). We deploy a two-dimensional latent space ($m=2$) with a standard Gaussian prior. Despite its low dimensionality, such a latent space is sufficient for generation of quality MNIST images \cite{kingma2013auto} and enables direct visualization. For better parametrization quality, the latent prior is additionally encouraged by a hyperparameter $\beta = 2$ multiplying the corresponding ELBO loss term \cite{higgins2016beta}.

Fig.~\ref{fig:latent_exploration} displays the progressive approximation of $f$ as well as images decoded from queried points. As can be seen from the latter, around $t=500$ the class of $p_t$ (digit) is different at each step $t$ and the ANNR is thus covering a wide semantic range of data. In contrast, around $t=1000$ the decoded images are similar, indicating a phase close to convergence in which the approximation is refined locally.

\textbf{Runtime Comparison.}
The overall flexibility of the geometric framework utilized by the ANNR comes with a certain computational cost, which highlights a tradeoff between effectiveness and efficiency of the methods. The computational cost of the ANNR is largely mitigated by the Delaunay approximation the method relies on. In addition, the runtime of the ANNR can be heavily controlled by adjusting the desired Markov Chain sampling precision and increasing the number of available parallel threads.

We provide a runtime comparison between the methods on experiments discussed earlier in this section in Table \ref{tab:runtime-comparison}. Each number represents the average runtime for a single experimental run. All experiments are performed on CPU Ryzen 9 5950X 16-Core. We note that in case of the ANNR single queries still take milliseconds to compute, which is reasonable for real-life applications with high function evaluation cost.

\begin{table}[!tbh]   
\centering
\caption{Runtime Comparison.}
\label{tab:runtime-comparison}
\begin{small}  
\begin{tabular}{cccc}
\toprule
& ANNR & DEFER \\
\midrule
\begin{tabular}[c]{@{}c@{}}Gravitational waves\\parameter estimation\end{tabular}         & $1347$~s & $186$~s  \\
\rule{0pt}{15pt}   
\begin{tabular}[c]{@{}c@{}}Latent space\\ volume density\end{tabular} & $220$~ms & $90$~ms \\
\bottomrule
\end{tabular}
\end{small}
\end{table}

\section{Conclusion and Future Work}
In this work we introduced the ANNR -- an adaptation of nearest neighbor regression to an active setting. We provided a computationally  efficient implementation of the ANNR as well as theoretical halting guarantees. 
Our empirical investigations have shown that the ANNR outperforms the state-of-the-art active function approximator called DEFER. 

An interesting line for future investigation lies in designing active querying strategies for other higher-dimensional function approximators from the existing literature. Examples include the piecewise linear Delaunay interpolator \cite{omohundro1989delaunay} or the tensor product of cubic splines \cite{lalescu2009two}. 

\section{ACKNOWLEDGEMENTS}
This work was supported by the Swedish Research Council, the Knut and Alice
Wallenberg Foundation and the European Research Council (ERC-BIRD-884807).

\newpage
\bibliography{biblio}
\bibliographystyle{icml2022}

\newpage
\appendix
\onecolumn
\section{Proofs of Theoretical Results}
\label{appendix-proofs}
In this section we provide proofs for the theoretical results presented in the main body of the paper. We start by proving the result from Sec.~\ref{interpretation}.

\begin{prop}\label{appendix-interpretation}
There exists a constant $C > 0$ such that for every smooth function $f : \ \Omega \rightarrow \mathbb{R}$ the following inequality holds: 
\begin{equation}\label{appendix-inequal}
\log \textnormal{Vol}(\Gamma_f)  \geq C \|f - f_\Omega \|_2^2 + \log \textnormal{Vol}(\Omega) + o(\| g_f \|^2).
\end{equation}
\end{prop}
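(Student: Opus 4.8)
The plan is to express the graph volume through the area formula, Taylor-expand the integrand around zero gradient, and reduce the leading term to the classical Poincar\'e--Wirtinger inequality on $\Omega$.

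First, recall that the graph $\Gamma_f$ of a smooth $f$ carries the Riemannian metric induced from $\mathbb{R}^{m+1}$, namely $\delta_{ij} + \partial_{x_i}f\,\partial_{x_j}f = \delta_{ij} + (g_f)_{ij}$; since the matrix $I + \nabla f \otimes \nabla f$ has eigenvalues $1 + \|\nabla f\|^2$ and $1$ (the latter with multiplicity $m-1$), its determinant equals $1 + \|\nabla f\|^2$, and therefore
\begin{equation}
\textnormal{Vol}(\Gamma_f) \;=\; \int_\Omega \sqrt{1 + \|\nabla f(x)\|^2}\; dx .
\end{equation}
Next, I would use the bound $\sqrt{1+t} \ge 1 + \tfrac t2 - \tfrac{t^2}{8}$, valid for all $t \ge 0$ because the difference is convex and vanishes to second order at $t = 0$, applied pointwise at $t = \|\nabla f(x)\|^2$. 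Integrating over $\Omega$ and recalling that $\|g_f\| = \|\nabla f\|^2$ pointwise, this yields $\textnormal{Vol}(\Gamma_f) \ge \textnormal{Vol}(\Omega) + \tfrac12\int_\Omega \|\nabla f\|^2\,dx + o(\|g_f\|^2)$, the remainder collecting the super-quadratic contributions (controlled uniformly using compactness of $\Omega$). Taking logarithms, with $V = \textnormal{Vol}(\Omega)$ and $\delta$ denoting the above non-negative second term, the elementary inequality $\log(V + \delta) \ge \log V + \delta/V - (\delta/V)^2$ gives
\begin{equation}
\log \textnormal{Vol}(\Gamma_f) \;\ge\; \log \textnormal{Vol}(\Omega) + \frac{1}{2\,\textnormal{Vol}(\Omega)} \int_\Omega \|\nabla f\|^2\, dx + o(\|g_f\|^2),
\end{equation}
the $(\delta/V)^2$ term being absorbed into the remainder. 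Finally, the Poincar\'e--Wirtinger inequality on the connected compact manifold with boundary $\Omega$ provides a constant $C_\Omega > 0$, depending only on $\Omega$, with $\|f - f_\Omega\|_2^2 \le C_\Omega \int_\Omega \|\nabla f\|^2\, dx$; substituting this and setting $C = 1/(2\, C_\Omega\, \textnormal{Vol}(\Omega)) > 0$ establishes the claimed inequality.

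The main obstacle is the Poincar\'e--Wirtinger step: one needs a uniform Poincar\'e constant for $\Omega$, equivalently a spectral gap for the Neumann Laplacian, which holds because $\Omega$ is a compact manifold with boundary (a Lipschitz domain) and follows from a Rellich--Kondrachov compactness argument — I would invoke this as a standard fact rather than reprove it. Beyond that, the work is routine but requires care in the bookkeeping of the error terms: one must verify that the Taylor remainder of the square root and the quadratic correction coming from the expansion of $\log(1+u)$ are both of the order of, or smaller than, the stated remainder; this is where the precise reading of the $o(\|g_f\|^2)$ notation — as an asymptotic statement in the regime of gradients small uniformly on $\Omega$, so that $\|g_f\| = \|\nabla f\|^2$ is controlled — enters.
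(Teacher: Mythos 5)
Your proof is correct and reaches the paper's conclusion along the same skeleton --- area formula for the graph volume, extraction of the Dirichlet energy $\int_\Omega\|\nabla f\|^2$, then Poincar\'e--Wirtinger --- but the middle step is organized differently. The paper keeps the integrand as $\sqrt{\textnormal{det}(1+g_f)}$ and applies Jensen's inequality to the concave logarithm, obtaining $\log\frac{\textnormal{Vol}(\Gamma_f)}{\textnormal{Vol}(\Omega)} \ge \frac{1}{2\,\textnormal{Vol}(\Omega)}\int_\Omega\log\textnormal{det}(1+g_f)$ exactly, and only afterwards expands the matrix logarithm via $\log\textnormal{det}=\textnormal{tr}\log$ and $\log(1+g_f)=g_f+o(\|g_f\|^2)$. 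You instead evaluate the determinant of the rank-one perturbation explicitly, $\textnormal{det}(I+\nabla f\otimes\nabla f)=1+\|\nabla f\|^2$, and replace Jensen and the matrix logarithm by two elementary scalar lower bounds, $\sqrt{1+t}\ge 1+\tfrac t2-\tfrac{t^2}{8}$ and $\log(1+u)\ge u-u^2$, both valid globally on the nonnegative half-line rather than only asymptotically. This is arguably cleaner: the discarded terms are explicit (e.g.\ $\tfrac18\int_\Omega\|\nabla f\|^4$ and a $(\delta/V)^2$ term controlled by Cauchy--Schwarz) instead of being hidden in the Taylor remainder of a matrix function, and your closing remark correctly flags the one point where both arguments are equally loose --- these remainders are really $O(\|g_f\|^2)$, not $o(\|g_f\|^2)$, so the error term in the statement must be read, as the paper intends, as an informal small-gradient asymptotic. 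The Poincar\'e--Wirtinger step (using connectedness of $\Omega$) and the resulting constant $C$ are identical in both proofs.
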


\begin{proof}
The volume of $\Gamma_f$ can be expressed as 
\begin{equation}
\textnormal{Vol}(\Gamma_f) = \int_\Omega \sqrt{\textnormal{det}(1 + g_f)}
\end{equation}
and thus by Jensen inequality we get:  
\begin{equation}\label{appendix-jensen}
    \log \frac{\textnormal{Vol}(\Gamma_f)}{\textnormal{Vol}(\Omega)} \geq \frac{1}{2 \textnormal{Vol}(\Omega)}\int_\Omega \log \textnormal{det}(1 + g_f).
\end{equation}
Since by general properties of matrices $\log \textnormal{det}(g_f) = \textnormal{tr}(\log g_f)$ and $\log(1 + g_f) = g_f + o(\| g_f \|^2 )$ where $\| \cdot \|$ denotes the standard matrix norm, the right hand side of Eq.~\ref{appendix-jensen} reduces to

\begin{equation}
    C_1 \int_\Omega \textnormal{tr}(g_f) + o(\| g_f\|^2) = C_1 \| \nabla_f \|_2^2 + o(\| g_f \|^2).
\end{equation}
Finally, by Poinar\'e-Wirtinger inequality \cite{brezis2010functional} $\| \nabla_f \|_2^2 \geq C_2 \| f - f_\Omega\|_2^2$ for some constant $C_2 >0$, which concludes the proof. 
\end{proof}

We now prove the halting guarantee from Sec.~\ref{convergence}. First, we provide the following result concerning high-dimensional Euclidean geometry which will be necessary for the main proof. Recall that $A \subseteq \mathbb{R}^m$ is a convex compact set. 

\begin{lemm}\label{appendix-euclid}
For each $\delta > 0$ there exists an $\eta > 0$ such that for each $n$-dimensional simplex $\sigma \subseteq A$ if $\textnormal{Vol}(\sigma) > \delta$ then $d(v, x) > \eta$ for each vertex $v$ of $\sigma$ and each point $x$ of the segment connecting the baricenter and the circumcenter of $\sigma$. 
\end{lemm}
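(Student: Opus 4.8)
## Proof Plan for Lemma~\ref{appendix-euclid}

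The plan is to prove the contrapositive by a compactness argument combined with a careful analysis of what happens when a simplex's volume is bounded below but one of its vertices approaches a point on the Euler segment. First I would fix $\delta > 0$ and suppose, for contradiction, that no such $\eta$ exists. Then there is a sequence of $n$-dimensional simplices $\sigma_k \subseteq A$ with $\textnormal{Vol}(\sigma_k) > \delta$, together with vertices $v_k$ of $\sigma_k$ and points $x_k$ on the segment joining the barycenter $b_k$ and circumcenter $\textnormal{Circ}(\sigma_k)$ of $\sigma_k$, such that $d(v_k, x_k) \to 0$. The subtlety here is that $\textnormal{Circ}(\sigma_k)$ need not lie in $A$, so the points $x_k$ range over a set that is a priori unbounded; however, since $d(v_k, x_k) \to 0$ and $v_k \in A$ with $A$ bounded, the $x_k$ are eventually bounded, and moreover $d(b_k, x_k) \geq d(v_k, b_k) - d(v_k, x_k)$, which I will want to keep track of.

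Next I would pass to subsequences so that all the relevant quantities converge: the vertices of $\sigma_k$ converge (using compactness of $A$) to points forming a limiting simplex $\sigma_\infty$ with $\textnormal{Vol}(\sigma_\infty) \geq \delta > 0$, hence $\sigma_\infty$ is non-degenerate; the distinguished vertex $v_k \to v_\infty$, a vertex of $\sigma_\infty$; the barycenters $b_k \to b_\infty$, the barycenter of $\sigma_\infty$; and, crucially, since a non-degenerate simplex has a well-defined finite circumcenter depending continuously on its vertices, $\textnormal{Circ}(\sigma_k) \to \textnormal{Circ}(\sigma_\infty)$. The Euler segments $[b_k, \textnormal{Circ}(\sigma_k)]$ therefore converge (in Hausdorff distance) to $[b_\infty, \textnormal{Circ}(\sigma_\infty)]$, so $x_k \to x_\infty$ for some $x_\infty$ on this limiting segment, and by continuity $d(v_\infty, x_\infty) = 0$, i.e.\ $x_\infty = v_\infty$.

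The contradiction then comes from showing that a vertex of a non-degenerate simplex cannot lie on the segment connecting its barycenter and its circumcenter. For this I would argue that the barycenter lies strictly in the interior of $\sigma_\infty$ (it is the average of the vertices with all weights equal to $1/(n+1) > 0$), and then observe that the open segment from $b_\infty$ to any point $y$ stays in the interior of the simplex until it exits through the relative interior of a facet; in particular the only way the closed segment $[b_\infty, y]$ can contain a vertex $v_\infty$ is if $v_\infty$ is one of its two endpoints, forcing either $b_\infty = v_\infty$ (impossible for $n \geq 1$, since the barycenter of a non-degenerate simplex is not a vertex) or $\textnormal{Circ}(\sigma_\infty) = v_\infty$. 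The latter is likewise impossible: if the circumcenter of $\sigma_\infty$ equalled one of its vertices, the circumradius would be $0$, contradicting non-degeneracy (the vertices are distinct). The main obstacle is the first paragraph's issue — ensuring the sequence $x_k$ and the circumcenters stay under control despite $\textnormal{Circ}(\sigma_k)$ possibly lying far outside $A$ — which is precisely why the volume lower bound $\textnormal{Vol}(\sigma_k) > \delta$ is indispensable: it keeps the limiting simplex non-degenerate and hence its circumcenter finite, so the whole compactness argument closes up.
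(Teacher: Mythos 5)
Your compactness strategy is a legitimate alternative to the paper's argument (the paper instead derives explicit lower bounds on the circumradius and on the heights of $\sigma$, and then bounds the distance from $v$ to the segment using the fact that the angle at $v$ in the triangle $v,b,c$ is acute), and your reduction is carried out correctly: the volume bound keeps the limit simplex non-degenerate, the circumcenter of a non-degenerate simplex depends continuously on its vertices, the Euler segments converge, and so the lemma reduces to the qualitative claim that no vertex of a non-degenerate simplex lies on the closed segment joining its barycenter and its circumcenter.

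However, your proof of that last claim has a genuine gap. You assert that a segment issuing from the barycenter ``exits through the relative interior of a facet,'' so that $[b_\infty,y]$ can only contain a vertex as an endpoint. This is false for general $y$: take $y$ on the ray from $b_\infty$ through the vertex $v_\infty$, beyond $v_\infty$ (e.g.\ the reflection of the barycenter of a triangle across one of its vertices); then $v_\infty$ lies in the relative interior of $[b_\infty,y]$. So you still have to exclude the possibility that the circumcenter $c$ sits on the ray from $b$ through $v$ beyond $v$, and this is precisely the geometric content the paper isolates. Since $\|c-v_i\|=\|c-v\|$ for every vertex $v_i$, expanding gives $(c-v)\cdot(v_i-v)=\tfrac12\|v_i-v\|^2\ge 0$, and averaging over the vertices yields
\[
(b-v)\cdot(c-v)=\frac{1}{2(n+1)}\sum_i\|v_i-v\|^2>0,
\]
i.e.\ the angle $\angle\, b\,v\,c$ is strictly acute; if $v$ lay in the relative interior of $[b,c]$ this dot product would be negative. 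Inserting this computation, together with your (correct) exclusion of the two endpoints, closes the gap and the rest of your argument stands.
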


\begin{proof}
By comparing the volume of a simplex with the volume of its circumsphere, there exists $\eta_1> 0$ such that for any simplex $\sigma$ if  $\textnormal{Vol}(\sigma) > \delta$ then $R_\sigma > \eta_1$, where $R_\sigma$ denotes the radius of the circumsphere of $\sigma$. There also exists a $\eta_2> 0$ such that for any simplex $\sigma$ if  $\textnormal{Vol}(\sigma) > \delta$ then all the heights (i.e., the segments passing through the vertices and orthogonal to the opposite faces) of $\sigma$ are greater than $\eta_2$. To see this, note that otherwise there would exist triangles of arbitrarily small heights and thus, by the volume constraint, with faces of arbitrarily large volume, contradicting the compactness of $A$. Now, given a simplex $\sigma$ and a vertex $v$ of $\sigma$, the segment connecting $v$ to its opposite face and containing the barycenter $b$ of $\sigma$ is shorter than the height passing through $v$ because of orthogonality of the latter. The barycenter separates that segment into a portion of $\frac{m-1}{m}$ of it containing $v$. 
Thus, we have $d(v, b) > \frac{m-1}{m}\eta_2$. 

Consider $\eta_3 = \min \{\eta_1, \frac{m-1}{m}\eta_2 \}$. For any simplex $\sigma$ with circumenter $c$ and barycenter $b$ we then have $d(v,b) > \eta_3$ and $d(v, c) > \eta_3$ for every vertex $v$ of $\sigma$. Since $(b - v ) \cdot (c-v) > 0$, we have that $d(v, x) > \frac{\sqrt{2}}{2}\eta_3 = \eta$ for each $x$ in the segment connecting $b$ and $c$, as desired.  
\end{proof}

We now prove the main result from Sec.~\ref{convergence}.

\begin{prop}\label{appendix-lowerlimit}
 Assume that the ground-truth function $f$ is Lipschitz and let $s_t = \textnormal{max}_{\sigma \in \dela_{P_t}} \textnormal{Vol}(\hat{\sigma})$. Then Alg.~\ref{alg:nnr} always halts for any $\varepsilon$ or, in other words,  $\underline{\textnormal{lim}}_{t \to \infty} s_t = 0$. 
\end{prop}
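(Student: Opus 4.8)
The plan is to argue by contradiction via a packing argument. Since by the bounding construction of Sec.~\ref{boundingsection} every $P_t$ lies in the fixed compact convex set $A$, and since the initialization can be taken so that $\langle P_0\rangle = A$ (in particular $P_0$, and hence each $P_t$, contains points spanning $A$), it suffices to show that if Alg.~\ref{alg:nnr} never halts for some $\varepsilon>0$ then it inserts infinitely many points that are pairwise $\eta$-separated for a \emph{single} fixed $\eta>0$, which is impossible in a compact set. Equivalently, assuming $\underline{\lim}_t s_t = c > 0$ I would derive a contradiction. Let $K$ be the Lipschitz constant of $f$ and fix $T_0$ with $s_t \geq \delta := c/2$ for all $t\geq T_0$; in particular a new point $p_{t+1}$ is queried at every step $t\geq T_0$.

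Fix such a $t$, let $\overline\sigma_t$ be the maximizing $m$-simplex and write $c_t = \circu(\overline\sigma_t)$, $R_t$ for its circumradius, $b_t$ for its barycenter, $v_0,\dots,v_m$ for its vertices. The first real step is a lower bound extracted from the lifted volume: since $f$ is $K$-Lipschitz, each edge of $\hat{\overline\sigma}_t$ has length at most $\sqrt{1+\lambda^2K^2}$ times the corresponding edge of $\overline\sigma_t$, so bounding the $m$-volume of a simplex by the product of $m$ of its edge lengths divided by $m!$ gives $\delta \leq \textnormal{Vol}(\hat{\overline\sigma}_t) \leq \tfrac{1}{m!}\,\textnormal{diam}(\overline\sigma_t)^m(1+\lambda^2K^2)^{m/2}$, hence $\textnormal{diam}(\overline\sigma_t) \geq \eta_1 := (m!\,\delta)^{1/m}(1+\lambda^2K^2)^{-1/2}$, and since all $v_i$ lie on the circumsphere, $R_t \geq \eta_1/2$.

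Next I would bound $d(p_{t+1},P_t)$ below in the two cases of the query rule. If $c_t\in A$ then $p_{t+1}=c_t$, and the empty-circumsphere property of $\dela_{P_t}$ gives $d(q,p_{t+1})\geq R_t\geq\eta_1/2$ for every $q\in P_t$. If $c_t\notin A$ then $p_{t+1}$ lies on the segment from $b_t$ to $c_t$; here the closed ball $\overline{B(c_t,R_t)}$ is convex and contains $b_t$ (a convex combination of vertices on the sphere) and $p_{t+1}$, so $[b_t,p_{t+1}]\subseteq\overline{B(c_t,R_t)}$ and emptiness of the circumsphere yields $d(q,p_{t+1})\geq R_t-d(p_{t+1},c_t)$ for all $q\in P_t$. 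Combining this with the elementary identity $d(c_t,b_t)^2 = R_t^2 - (m+1)^{-2}\sum_{i<j}\|v_i-v_j\|^2$ and the bound $\textnormal{diam}(\overline\sigma_t)\geq\eta_1$ shows $p_{t+1}$ is bounded away from $P_t$ as long as $R_t$ stays bounded; the remaining regime, a very degenerate $\overline\sigma_t$ with $R_t$ large, is the delicate one and is addressed below using Lemma~\ref{appendix-euclid}. In all cases one obtains a single $\eta=\eta(\delta,m,\lambda,K,A)>0$ with $d(p_{t+1},P_t)\geq\eta$ for every $t\geq T_0$, and the packing argument closes the proof. The same reasoning applies verbatim to the partial (approximate) triangulation of Sec.~\ref{secapprox}, since the empty-circumsphere property still holds for the simplices that are actually found.

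I expect the genuine difficulty to be the case $c_t\notin A$ with $\overline\sigma_t$ nearly degenerate: a thin base simplex can have a large lifted volume (a large affine-interpolation slope compensating a tiny base volume), so a lower bound on $\textnormal{Vol}(\overline\sigma_t)$ --- which Lemma~\ref{appendix-euclid} needs in order to keep $p_{t+1}$ away from the vertices of $\overline\sigma_t$ --- does not follow from $\textnormal{Vol}(\hat{\overline\sigma}_t)\geq\delta$ alone. Resolving this should exploit that a degenerate Delaunay simplex has a huge circumsphere which, being forced to be empty of the points of $P_0\subseteq P_t$ that span $A$, cannot be pushed far outside $A$; i.e.\ the empty-circumsphere property, the boundary modification defining $p_{t+1}$, and the identity $\langle P_0\rangle = A$ must be used together to rule out, or to control, this regime. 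The remaining ingredients --- the edge-length/volume estimate, the circumcenter--barycenter identity, and the final compactness conclusion --- are routine.
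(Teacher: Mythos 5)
Your overall architecture is the same as the paper's: argue by contradiction, split the queries into those landing in the interior of $A$ and those projected onto $\partial A$ via the construction of Sec.~\ref{boundingsection}, show that each kind is uniformly separated from the existing dataset, and conclude by a packing argument in the compact set $A$. Your treatment of the interior case is complete and in fact somewhat more careful than the paper's: the Hadamard/edge-length estimate $\textnormal{Vol}(\hat{\overline\sigma}_t)\le\frac{1}{m!}(1+\lambda^2K^2)^{m/2}\,\textnormal{diam}(\overline\sigma_t)^m$ gives the circumradius bound $R_t\ge\eta_1/2$ directly from the lifted volume, and the empty-circumsphere property then yields the separation, without ever needing a lower bound on $\textnormal{Vol}(\overline\sigma_t)$.

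The genuine gap is exactly where you locate it: the boundary case with a nearly degenerate $\overline\sigma_t$. Your inequality $d(q,p_{t+1})\ge R_t-d(p_{t+1},c_t)$ is vacuous when $R_t$ is large and $p_{t+1}$ sits near the barycenter, and you do not supply the missing ingredient, so as submitted the proposal is not a complete proof. The paper closes this case by asserting, as its very first step, that Lipschitzness of $f$ provides a $\delta>0$ such that $\textnormal{Vol}(\hat\sigma)>\varepsilon$ implies $\textnormal{Vol}(\sigma)>\delta$ for any simplex; this base-volume bound is what Lemma~\ref{appendix-euclid} consumes to keep every point of the barycenter--circumcenter segment (hence $p_{t+1}\in\partial A$) at distance $>\eta$ from the vertices of $\overline\sigma_t$, after which the packing argument finishes. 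So, relative to the paper, the single missing idea is that implication. Be aware, though, that your stated reason for doubting it is substantive rather than a confusion on your part: for $m\ge 2$, a $K$-Lipschitz $f$ on the thin triangle $\sigma=\langle(0,0),(1,0),(1/2,h)\rangle$ with $f=0$ at the first two vertices and $f=K/2$ at the third has $\textnormal{Vol}(\sigma)=h/2\to0$ while $\textnormal{Vol}(\hat\sigma)\to\lambda K/4>0$ as $h\to0$, so the implication fails for arbitrary simplices once $\varepsilon<\lambda K/4$. Rescuing it would require exploiting that $\overline\sigma_t$ is a Delaunay simplex of a point set whose convex hull is $A$ (the direction you gesture at in your last paragraph), and neither you nor the paper's written proof carries that out. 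A further small point: even granting Lemma~\ref{appendix-euclid}, it only bounds the distance from $p_{t+1}$ to the \emph{vertices} of $\overline\sigma_t$; extending the separation to all of $P_t$ requires an additional argument, which you would also need to supply.
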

\begin{proof}
Suppose by contradiction that there exists $\varepsilon > 0$ such that $s_t > \varepsilon$ for all sufficiently large $t$. Since $f$ (and thus $\lambda f$) is Lipschitz, there exists $\delta > 0$ such that for any simplex $\sigma$ if $\textnormal{Vol}(\hat{\sigma}) > \varepsilon$ then $\textnormal{Vol}(\sigma) > \delta$. 

Let us prove that there can not be infinite queries in the interior of the bounding region $A$. By comparing the volume of a simplex with the volume of its circumsphere, there exists $\gamma> 0$ such that for any simplex $\sigma$ if  $\textnormal{Vol}(\sigma) > \delta$ then $R_\sigma > \gamma$, where $R_\sigma$ denotes the radius of the circumsphere of $\sigma$. When a point $p_{t +1} = \circu ( \overline{\sigma})$ is queried, since by hypothesis $s_t = \textnormal{Vol}(\hat{\overline{\sigma}}) > \varepsilon$ we have $R_{\overline{\sigma}} > \gamma$. But then $d(p_{t +1}, q) > \gamma$ for all $q \in P_t$ since no points in $P_t$ are contained inside the circumsphere of the Delaunay simplex $\overline{\sigma}$. If infinite points in the interior of $A$ are queried, one gets an infinite sequence in $A$ whose pairwise distances are all greater than $\gamma$. This is impossible since $A$ is compact. 

As a consequence, the queried points $p_{t+1}$ belong to the boundary of $A$ for $t \gg 0$ and are obtained via the method described in Sec.~\ref{boundingsection}. By Lemma~\ref{appendix-euclid} there exists $\eta > 0$ such that $d(v, p_{t +1}) > \eta$ for $ t \gg 0$ where $v$ is any vertex of the simplex  $\overline{\sigma}$ whose lifting has the largest volume at step $t$. Since $\overline{\sigma}$ is a Delaunay simplex it holds that $d(q, p_{t +1}) > \eta$ for every $q \in P_t$ and $t \gg 0$, which again contradicts the compactness of $A$. 
\end{proof}

\section{Extended Experiments}
\subsection{Spiral}
\label{appendix-support}
\begin{figure}[h]
    \centering
    \begin{subfigure}[b]{.2\linewidth}
        \centering
        \includegraphics[width=\linewidth]{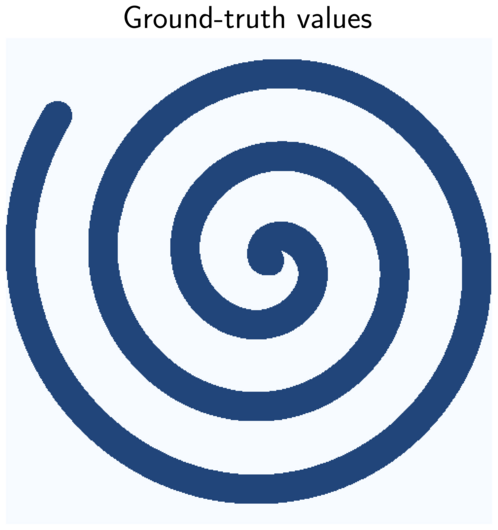}
    \end{subfigure}
    \hspace{\baselineskip}
    \begin{subfigure}[b]{.3\linewidth}
        \centering
        \includegraphics[width=\linewidth]{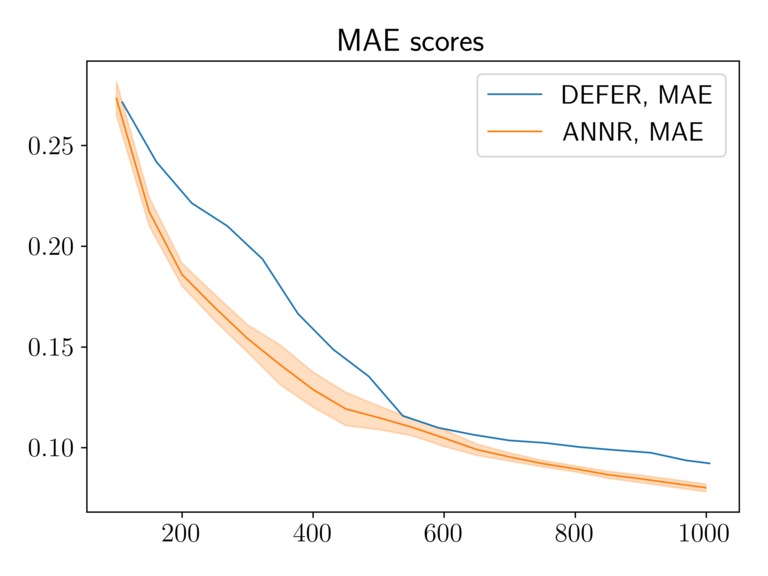}
    \end{subfigure}
    \hspace{\baselineskip}
    \begin{subfigure}[b]{.3\linewidth}
        \centering
        \includegraphics[width=\linewidth]{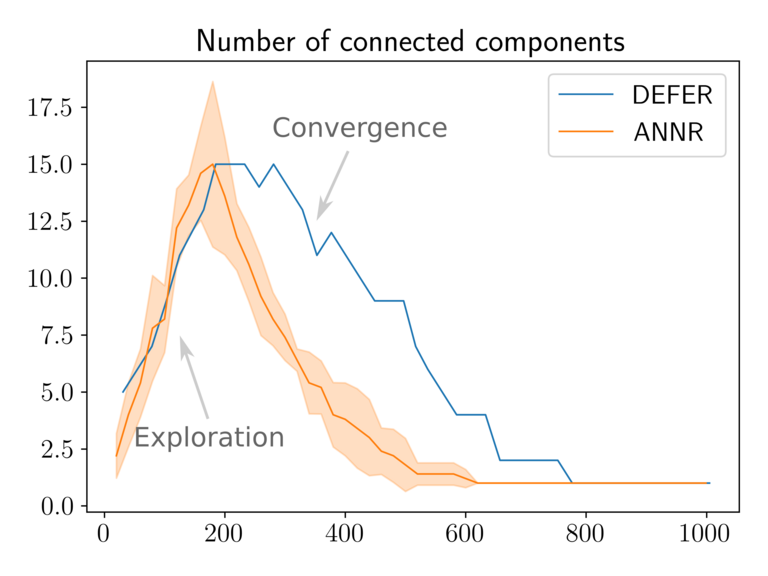}
    \end{subfigure}
    
    \caption{Ground-truth plot of a spiral characteristic function and performance comparison between the ANNR and DEFER as the number of queries increases. Scores are averaged over 5 runs.}
    \label{fig:spiral-app}
\end{figure}

\subsection{Arbitrary Domain}
\label{appendix-arbitrary}
In many cases the actual domain of a function is not the entire $\mathbb{R}^m$, but rather some compact subspace. In many optimization algorithms, the function domain is artificially extended to a bounding volume (usually a bounding box) containing it, assuming some zero value outside of the original function domain. In higher dimensions, this significantly increases the volume to explore~\cite{devi2016conceptualizing}. The ANNR takes advantage of the flexibility of Delaunay partitioning to restrict new queries to the (boundary of the) function domain. Fig.~\ref{fig:narrow} illustrates an approximation of $f(x) = \| x \|$ restricted to the intersection of two circles (the volume of the domain is only $\sim 1/30$ of the bounding box volume), which otherwise can be very inefficiently approximated by its bounding box. 

\begin{figure}[h]
    \centering
    \begin{subfigure}[b]{.19\linewidth}
        \centering
        \includegraphics[width=\linewidth]{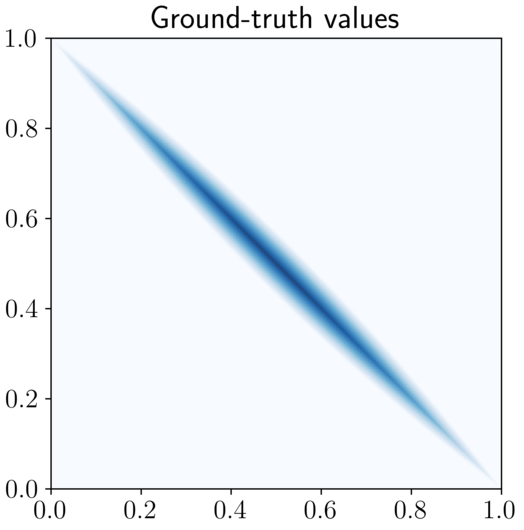}
    \end{subfigure}
    \hspace{\baselineskip}
    \begin{subfigure}[b]{.18\linewidth}
        \centering
        \includegraphics[width=\linewidth]{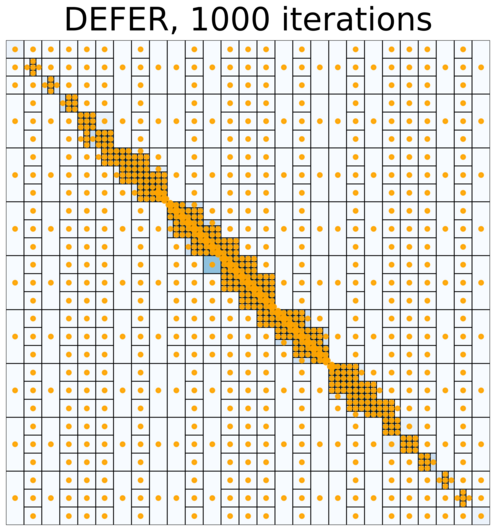}
    \end{subfigure}
    \hspace{\baselineskip}
    \begin{subfigure}[b]{.18\linewidth}
        \centering
        \includegraphics[width=\linewidth]{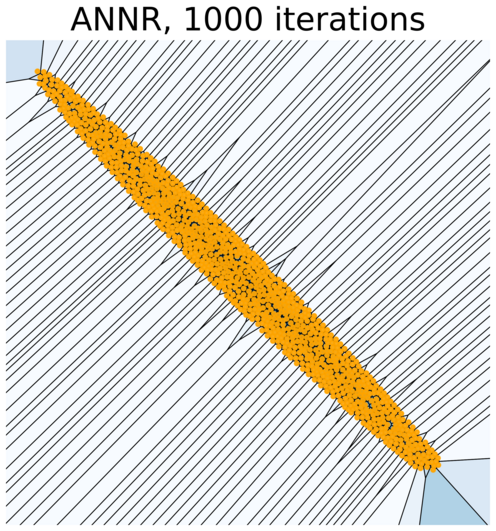}
    \end{subfigure}
    \hspace{\baselineskip}
    \begin{subfigure}[b]{.28\linewidth}
        \centering
        \includegraphics[width=\linewidth]{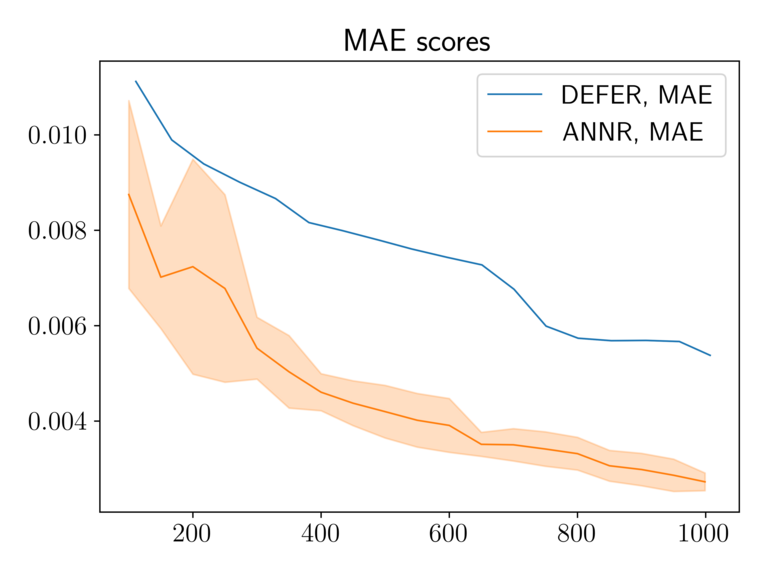}
    \end{subfigure}
    
    \caption{Approximation of a distance function defined on an intersection of two circles with radius $5$ and centered at (-3,~-3) and (4,~4). }
    \label{fig:narrow}
\end{figure}

\newpage

\subsection{Rotational Equivariance}
\label{appendix-rotation}
\begin{figure*}[tbh!]
    \hspace{2.3\baselineskip}
    \centering
    \begin{subfigure}[b]{.12\linewidth}
        \centering
        \subcaption*{\centering $0^{\circ}$}
    \end{subfigure}
    \begin{subfigure}[b]{.12\linewidth}
        \centering
        \subcaption*{\centering $10^{\circ}$}
    \end{subfigure}
    \begin{subfigure}[b]{.12\linewidth}
        \centering
        \subcaption*{\centering $20^{\circ}$}
    \end{subfigure}
    \begin{subfigure}[b]{.12\linewidth}
        \centering
        \subcaption*{\centering $30^{\circ}$}
    \end{subfigure}
    \begin{subfigure}[b]{.12\linewidth}
        \centering
        \subcaption*{\centering $40^{\circ}$}
    \end{subfigure}
    
        \begin{subfigure}[b]{.05\linewidth}
        \centering
        \subcaption*{\rotatebox[origin=c]{90}{\hspace{-5pt}DEFER}}
        \vspace{1.25\baselineskip}
    \end{subfigure}
    \centering
    \begin{subfigure}[b]{.12\linewidth}
        \centering
        \includegraphics[width=\linewidth]{images/ellipse/ellipse-defer-0.png}
    \end{subfigure}
    \begin{subfigure}[b]{.12\linewidth}
        \centering
        \includegraphics[width=\linewidth]{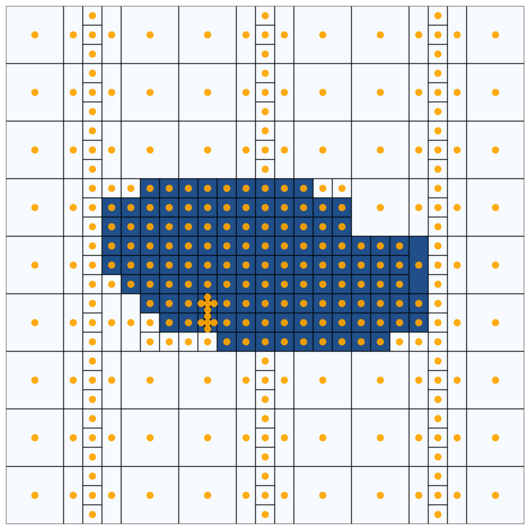}
    \end{subfigure}
    \begin{subfigure}[b]{.12\linewidth}
        \centering
        \includegraphics[width=\linewidth]{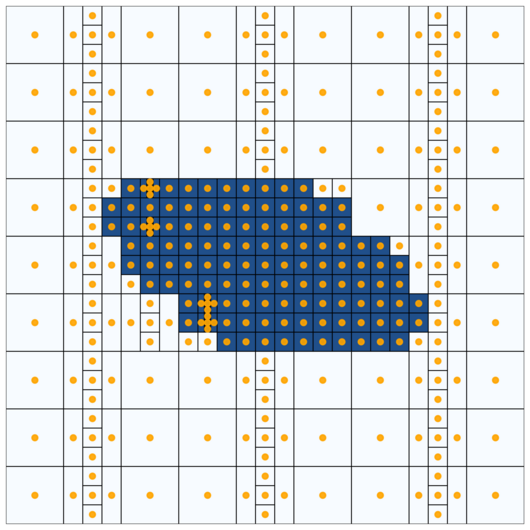}
    \end{subfigure}
    \begin{subfigure}[b]{.12\linewidth}
        \centering
        \includegraphics[width=\linewidth]{images/ellipse/ellipse-defer-30.png}
    \end{subfigure}
    \begin{subfigure}[b]{.12\linewidth}
        \centering
        \includegraphics[width=\linewidth]{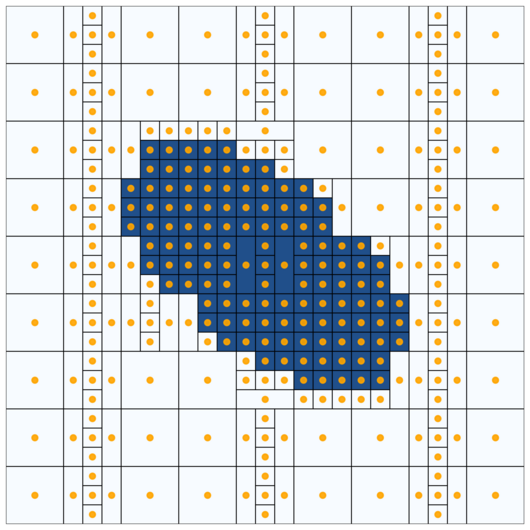}
    \end{subfigure}
    
    \begin{subfigure}[b]{.05\linewidth}
        \centering
        \subcaption*{\rotatebox[origin=c]{90}{\hspace{-5pt}ANNR}}
        \vspace{1.4\baselineskip}
    \end{subfigure}
    \centering
    \begin{subfigure}[b]{.12\linewidth}
        \centering
        \includegraphics[width=\linewidth]{images/ellipse/ellipse-annr-0.png}
    \end{subfigure}
    \begin{subfigure}[b]{.12\linewidth}
        \centering
        \includegraphics[width=\linewidth]{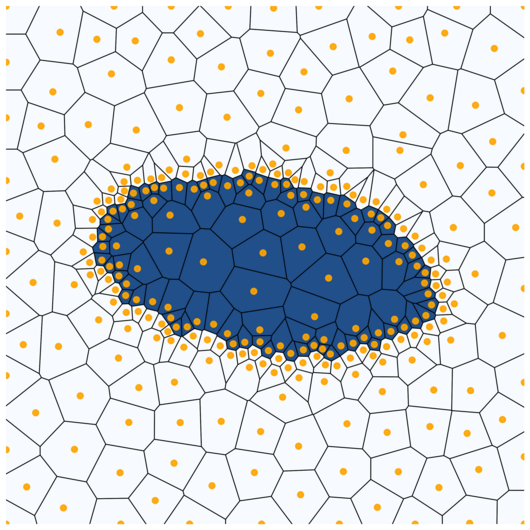}
    \end{subfigure}
    \begin{subfigure}[b]{.12\linewidth}
        \centering
        \includegraphics[width=\linewidth]{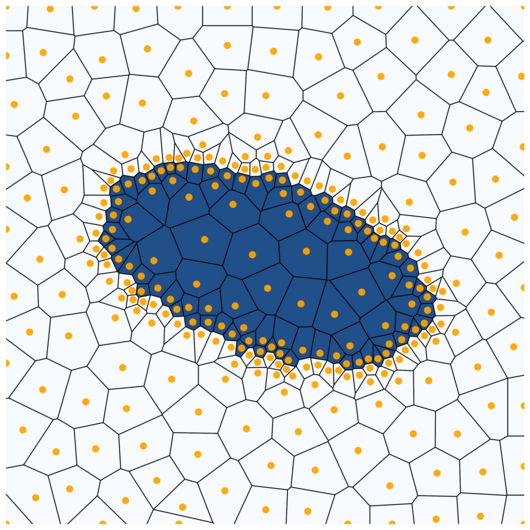}
    \end{subfigure}
    \begin{subfigure}[b]{.12\linewidth}
        \centering
        \includegraphics[width=\linewidth]{images/ellipse/ellipse-annr-30.png}
    \end{subfigure}
    \begin{subfigure}[b]{.12\linewidth}
        \centering
        \includegraphics[width=\linewidth]{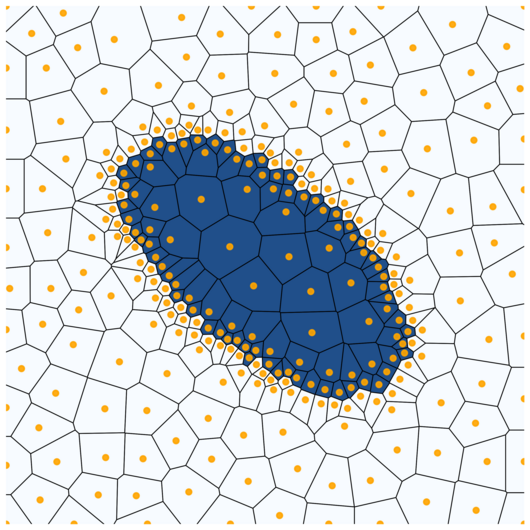}
    \end{subfigure}

       \includegraphics[width=.3\linewidth]{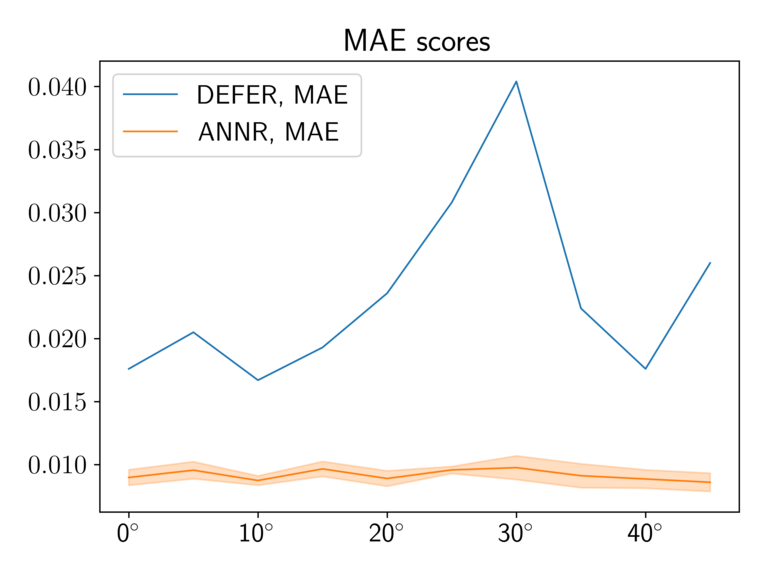}

    \caption{\textbf{Top}: approximation of an ellipse characteristic function $\mathbf{1}_{x^2 + 4y^2 \le 1}$ by DEFER and the ANNR with various rotations of the domain ($N=300$).
    \textbf{Bottom}: performance comparison as the angle varies. }
    \label{fig:ellipse-app}
\end{figure*}

\subsection{Gravitational Waves}
\label{appendix-gravitational}
\textbf{Data preprocessing.}
We follow the initial setup of the problem described in \cite{defer}. Namely, the function is constructed from a gravitational wave example provided \href{https://git.ligo.org/lscsoft/bilby/-/blob/c14012d6fe4f893d8b086d543ffd353dcef490b7/examples/gw_examples/injection_examples/fast_tutorial.py}{tutorial}. The six parameters of a gravitational wave generated by a binary black hole that are inferred by the model are: the luminosity distance $d_L$, the inclination angle $\iota$, the polarization angle $\psi$, the phase $\phi_c$ relative to a reference time and two spin magnitudes $a_1$ and $a_2$. For more details about the nature of the data we refer to \cite{bilby}.

Evaluation of the logarithm of the approximated likelihood on several uniformly selected points in the domain yields values distributed around $-8000$ (the simulator provides log-likelihood). Since our aim is to approximate the original likelihood function, we add $8000$ to all log-density evaluations and then exponentiate the values, effectively multiplying the likelihood function by $e^{8000}$.
This operation brings the majority of output values of the underlying function close to single digits, stabilizing the computations. Such scaling does not affect the baseline as DEFER is invariant to such transformations, and multiplicatively affects our choice of the lifting parameter.

\textbf{Volume clipping.}
While the approximated density function may appear to take relatively low values over the large part of the domain, some of the concentrated areas may produce values many orders of magnitude higher than that. Without any Lipschitz guarantees for the underlying function, such areas could create attractors for the ANNR, forcing the method's exploration to 'sink' in such singularities and over-exploit the area. 
In order to mitigate that, we propose to truncate the scores of simplices in accordance to a pre-selected sensible Lipschitz constant. 

Consider a simplex $\sigma \in \textnormal{Del}_P$ and its lifting $\hat{\sigma}$ and note that $\textnormal{Vol}(\hat{\sigma}) = \frac{1}{\cos{\alpha}}\textnormal{Vol}(\sigma)$, where $\alpha$ is the \textit{dihedral angle} between $\sigma$ and $\hat{\sigma}$ both naturally embedded in $\mathbb{R}^{m+1}$. Limiting the Lipschitz constant is equivalent to limiting the maximal dihedral angle to some $\alpha_0$. As the result of the clipping, $\textnormal{Vol}(\hat{\sigma})$ in Alg.~\ref{alg:nnr} gets transformed into $\min \{ \textnormal{Vol}(\hat{\sigma}), \frac{1}{\cos{\alpha_0}}\textnormal{Vol}(\sigma)\}$. In our experiments with gravitational waves, we use $\alpha_0=89^{\circ}$.

\textbf{Marginals.}
Fig.\ref{fig:all-marginals} presents the marginalizations of the approximated function over all possible two-parameter slices.

\begin{figure}[t!]
    \setlength{\tabcolsep}{0pt}
    \renewcommand{\arraystretch}{0}
    \centering
    \begin{tabular}{p{0.025\linewidth}@{}c@{}@{}c@{}@{}c@{}@{}c@{}@{}c@{}@{}c@{}@{}c@{}@{}c@{}@{}c@{}@{}c@{}@{}c@{}@{}c@{}@{}c@{}@{}c@{}@{}c@{}}
        & {\tiny $(d_L, \iota)$} 
        & {\tiny $(d_L, \psi)$} 
        & {\tiny $(d_L, \phi_c)$}
        & {\tiny $(d_L, a_1)$} 
        & {\tiny $(d_L, a_2)$}
        & {\tiny $(\iota, \psi)$}
        & {\tiny $(\iota, \phi_c)$}
        & {\tiny $(\iota, a_1)$}
        & {\tiny $(\iota, a_2)$}
        & {\tiny $(\psi, \phi_c)$}
        & {\tiny $(\psi, a_1)$}
        & {\tiny $(\psi, a_2)$}
        & {\tiny $(\phi_c, a_1)$}
        & {\tiny $(\phi_c, a_2)$}
        & {\tiny $(a_1, a_2)$}\\[0pt]
        
        \rotated{\hspace{2pt}\tiny Ground truth} & \includegraphics[align=c, width=.065\linewidth]{images/marg_true/0_1.png} & \includegraphics[align=c, width=.065\linewidth]{images/marg_true/0_2.png} & \includegraphics[align=c, width=.065\linewidth]{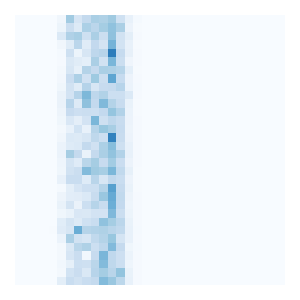} & \includegraphics[align=c, width=.065\linewidth]{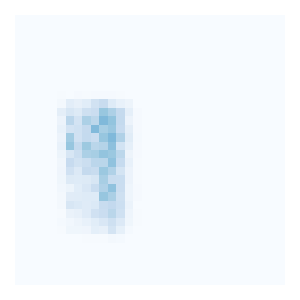} & \includegraphics[align=c, width=.065\linewidth]{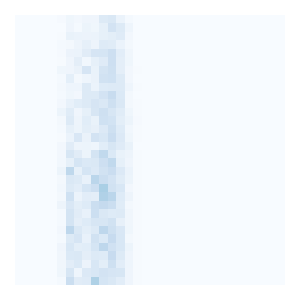} & \includegraphics[align=c, width=.065\linewidth]{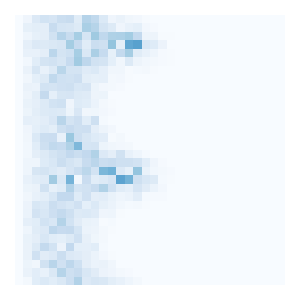} & \includegraphics[align=c, width=.065\linewidth]{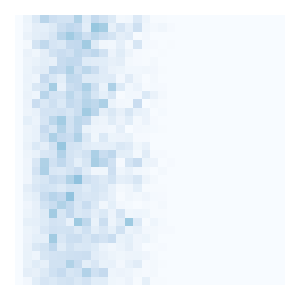} & \includegraphics[align=c, width=.065\linewidth]{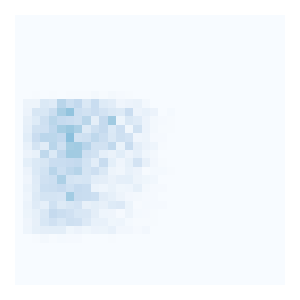} & \includegraphics[align=c, width=.065\linewidth]{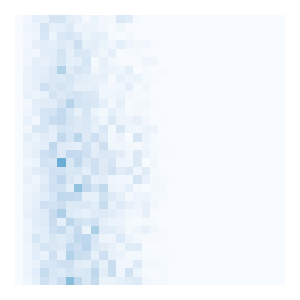} & \includegraphics[align=c, width=.065\linewidth]{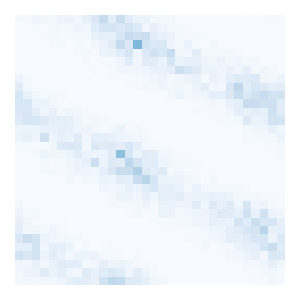} & \includegraphics[align=c, width=.065\linewidth]{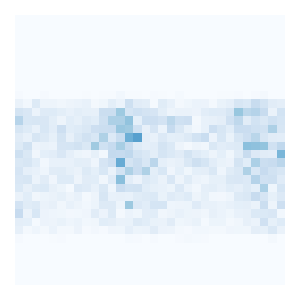} & \includegraphics[align=c, width=.065\linewidth]{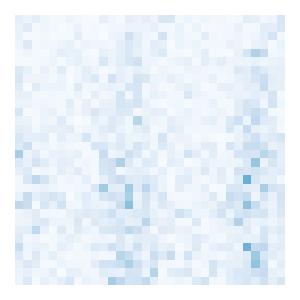} & \includegraphics[align=c, width=.065\linewidth]{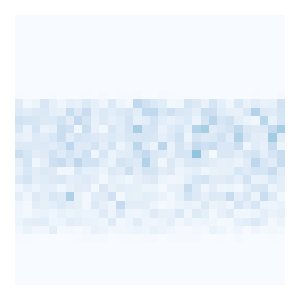} & \includegraphics[align=c, width=.065\linewidth]{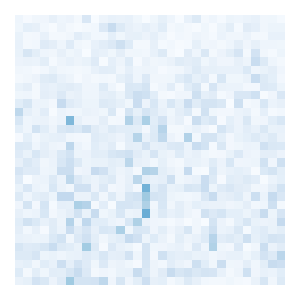} & \includegraphics[align=c, width=.065\linewidth]{images/marg_true/4_5.png}\\[0pt]
        
        \rotated{\hspace{2pt}\tiny DEFER} & \includegraphics[align=c, width=.065\linewidth]{images/marg_defer/0_1.png} & \includegraphics[align=c, width=.065\linewidth]{images/marg_defer/0_2.png} & \includegraphics[align=c, width=.065\linewidth]{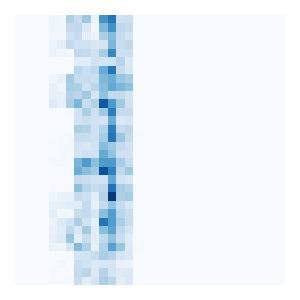} & \includegraphics[align=c, width=.065\linewidth]{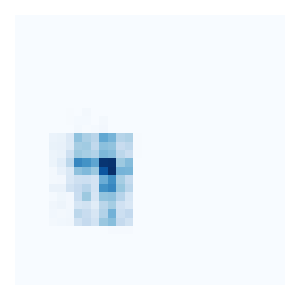} & \includegraphics[align=c, width=.065\linewidth]{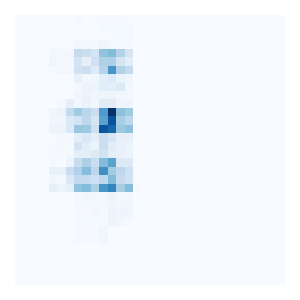} & \includegraphics[align=c, width=.065\linewidth]{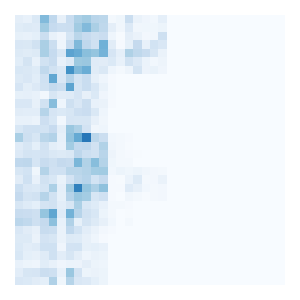} & \includegraphics[align=c, width=.065\linewidth]{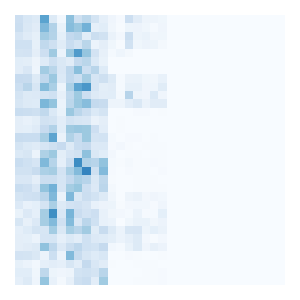} & \includegraphics[align=c, width=.065\linewidth]{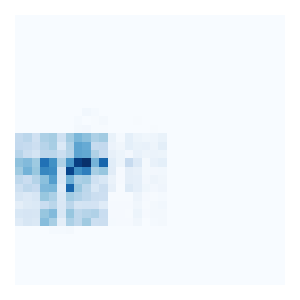} & \includegraphics[align=c, width=.065\linewidth]{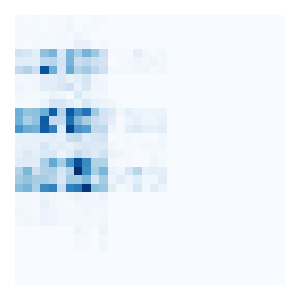} & \includegraphics[align=c, width=.065\linewidth]{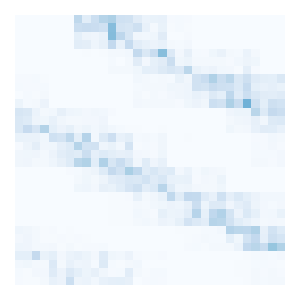} & \includegraphics[align=c, width=.065\linewidth]{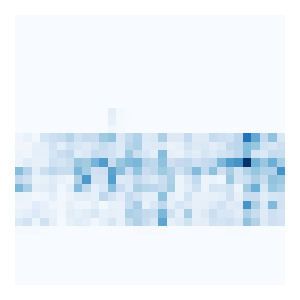} & \includegraphics[align=c, width=.065\linewidth]{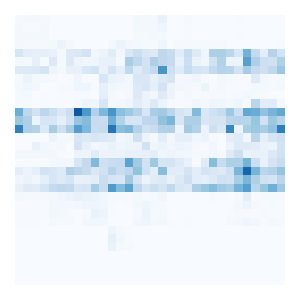} & \includegraphics[align=c, width=.065\linewidth]{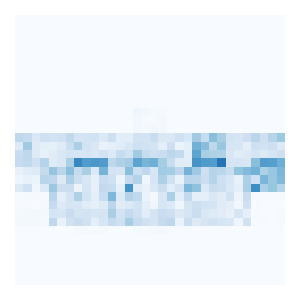} & \includegraphics[align=c, width=.065\linewidth]{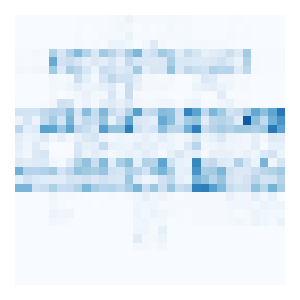} & \includegraphics[align=c, width=.065\linewidth]{images/marg_defer/4_5.png}\\[0pt]
        
        \rotated{\hspace{2pt}\tiny ANNR} & \includegraphics[align=c, width=.065\linewidth]{images/marg_annr/0_1.png} & \includegraphics[align=c, width=.065\linewidth]{images/marg_annr/0_2.png} & \includegraphics[align=c, width=.065\linewidth]{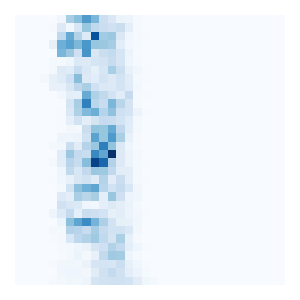} & \includegraphics[align=c, width=.065\linewidth]{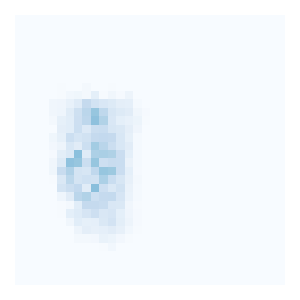} & \includegraphics[align=c, width=.065\linewidth]{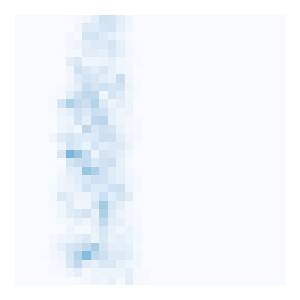} & \includegraphics[align=c, width=.065\linewidth]{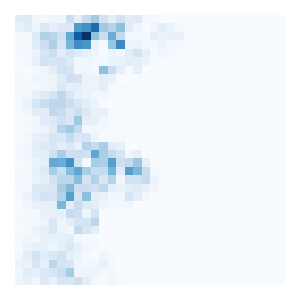} & \includegraphics[align=c, width=.065\linewidth]{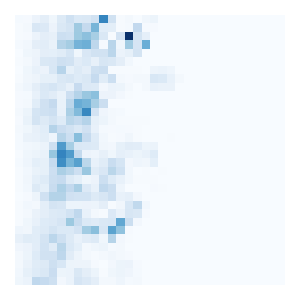} & \includegraphics[align=c, width=.065\linewidth]{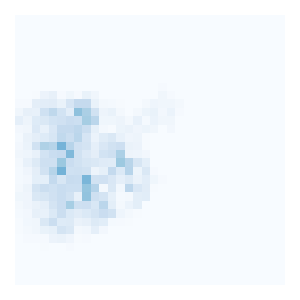} & \includegraphics[align=c, width=.065\linewidth]{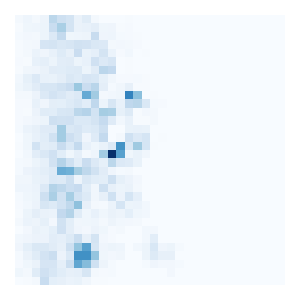} & \includegraphics[align=c, width=.065\linewidth]{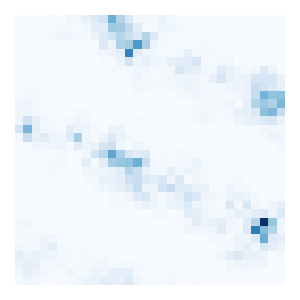} & \includegraphics[align=c, width=.065\linewidth]{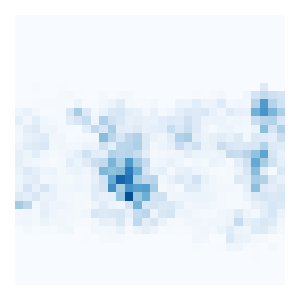} & \includegraphics[align=c, width=.065\linewidth]{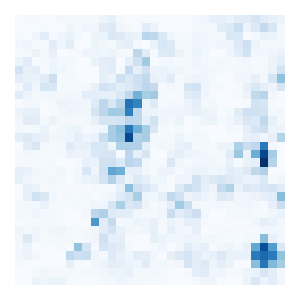} & \includegraphics[align=c, width=.065\linewidth]{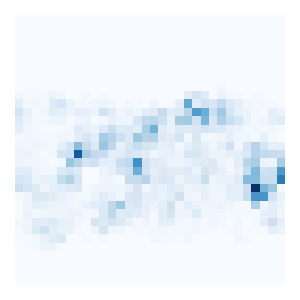} & \includegraphics[align=c, width=.065\linewidth]{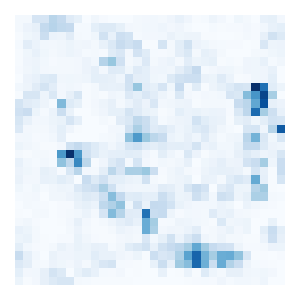} & \includegraphics[align=c, width=.065\linewidth]{images/marg_annr/4_5.png}\\[0pt]
        

    \end{tabular}
    \caption{Marginal distribution for gravitational waves over all two-dimensional slices of parameters. Each slice is identified by a pair of parameters named as in \cite{defer}.}
    \vspace{128in}
    \label{fig:all-marginals}
\end{figure}

\end{document}